\documentclass{article} 
\usepackage{iclr2027_conference,times}

\usepackage{amsmath,amsfonts,bm}

\def\eqref#1{equation~\ref{#1}}

\def\1{\bm{1}}

\DeclareMathAlphabet{\mathsfit}{\encodingdefault}{\sfdefault}{m}{sl}
\SetMathAlphabet{\mathsfit}{bold}{\encodingdefault}{\sfdefault}{bx}{n}

\usepackage{hyperref}
\hypersetup{hidelinks}
\usepackage{url}
\usepackage{booktabs}
\usepackage{amsfonts}
\usepackage{amssymb}
\usepackage{nicefrac}
\usepackage{microtype}
\usepackage{xcolor}
\usepackage{colortbl}
\usepackage{footmisc}
\usepackage{amsmath}
\usepackage{amsthm}
\usepackage{algorithm}
\usepackage{algpseudocode}
\usepackage{graphicx}
\usepackage{multirow}
\usepackage{subcaption}
\usepackage{wrapfig}
\usepackage{placeins}
\usepackage{enumitem}
\usepackage{comment}
\usepackage{pifont}
\newtheorem{theorem}{Theorem}

\title{FORTE: Adaptive Scoring and Exact Keyframe Selection for Long-Video Question Answering}
\author{%
\makebox[\dimexpr\textwidth-2\tabcolsep\relax][c]{%
Haifeng Huang\textsuperscript{1}\qquad
Biyin Xu\textsuperscript{1}\qquad
Chunsheng Xin\textsuperscript{1}\qquad
Yang Li\textsuperscript{1}%
}%
}

\iclrfinalcopy 
\begin{document}

\maketitle
\begingroup
\renewcommand{\thefootnote}{\arabic{footnote}}
\footnotetext[1]{Iowa State University. Correspondence to: Yang Li (\href{mailto:jerryyangli@gmail.com}{jerryyangli@gmail.com}).}
\endgroup
\setcounter{footnote}{1}

\fancyhead{}
\renewcommand{\headrulewidth}{0pt}

\begin{abstract}
Query-aware keyframe selection enables multimodal large language models (MLLMs) to process long videos using only a small set of question-relevant frames. Existing score-based methods, however, typically search within a fixed, uniformly sampled candidate pool, preventing evidence outside this pool from ever being selected. Given a limited relevance-scoring budget, the key challenge is to allocate evaluations adaptively to promising frames while continuing to explore underrepresented temporal regions. We introduce FORTE, a training-free framework that addresses this challenge through two stages: adaptive relevance scoring and global keyframe optimization. Starting from sparse, uniformly distributed observations, our efficient Gaussian-process relevance predictor estimates relevance for unscored frames, exploiting temporal locality and the approximately banded kernel structure to reduce the core computation from cubic to linear time in the number of frames for fixed bandwidth. The scoring stage then selects which frames to score next by balancing predicted relevance with temporal coverage, prioritizing promising regions while also exploring less-represented parts of the video. The optimization stage selects the final keyframes by maximizing an objective that jointly captures measured relevance and temporal coverage. We derive an exact algorithm that leverages the logarithmic coverage structure to identify the optimal subset of the scored candidate pool in time linear in the pool size, for a fixed final-frame budget. Experiments on four long-video question-answering benchmarks show that FORTE achieves the highest observed mean accuracy among the compared selectors under every tested scoring budget. Further evaluations demonstrate its consistent effectiveness across different relevance scorers and downstream MLLMs. \textit{The code will be released when the paper is published.}

\end{abstract}

\section{Introduction}
\label{sec:introduction}

Long-video understanding requires locating question-relevant evidence and integrating information across extended temporal contexts. For multimodal large language models (MLLMs), directly encoding large numbers of frames results in excessively long visual-token sequences, incurring prohibitive memory and computational costs and potentially exceeding the model's context window. Existing work mitigates this challenge through long-context modeling, visual compression, and memory mechanisms \citep{moviechat,ma-llm,longvila,longvu,video-xl}. In addition to these model-side approaches, \textit{keyframe selection} approaches the problem through another dimension by determining which video frames are highly relevant to the question and hence should be supplied to the MLLM in the first place, i.e., preserving question-relevant evidence of the video within a limited \emph{frame budget}.

Training-free, query-aware keyframe selection typically uses pretrained vision--language models, such as CLIP or BLIP \citep{clip,blip}, to estimate frame--question relevance without additional training or changes to the downstream MLLM. Existing methods commonly sample frames uniformly across the video or within temporal clips, evaluate their relevance to the question, and then determine which frames to provide to the MLLM, constrained by the frame budget, for example, by balancing relevance with temporal coverage or visual diversity \citep{aks,tcs,mdp3,adard-key,focus}. 

However, uniform sampling can miss question-relevant frames that appear between sampled timestamps, as illustrated in Figure~\ref{fig:intro-overview}(a). A straightforward way to alleviate this problem is to sample more densely, but this increases the number of frames that must be evaluated for relevance, leading to higher computation and memory costs. Therefore, an effective keyframe selection framework has to operate under another constraint, a \emph{relevance-scoring budget}, which limits the number of frames to be evaluated.


\begin{figure}[t]
    \centering
    \includegraphics[width=\linewidth]{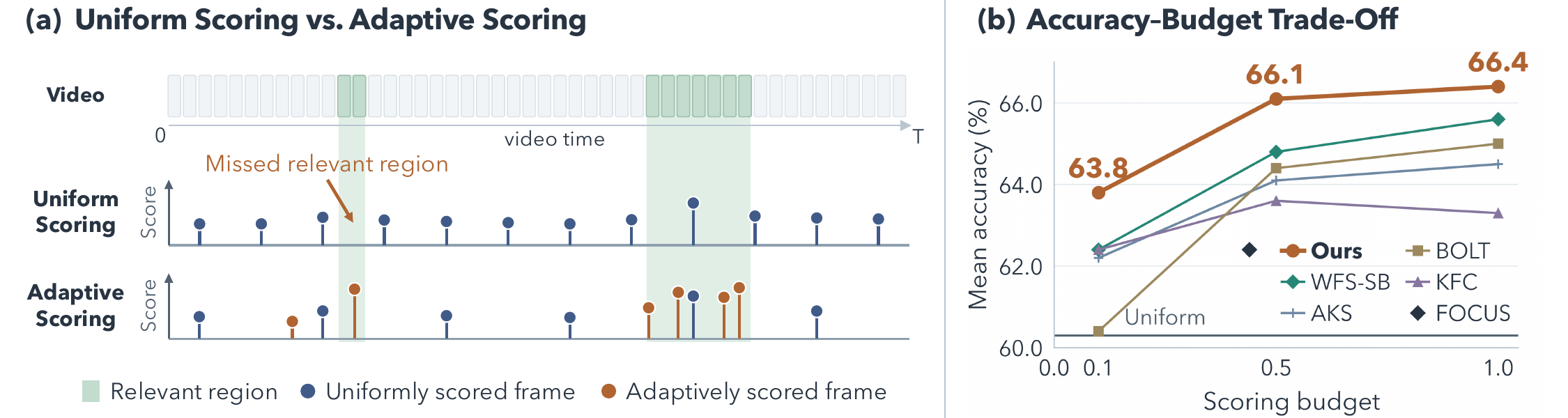}
    \caption{Budgeted keyframe selection for long-video understanding. (a) Adaptive scoring can capture relevant content missed by uniform scoring under the same evaluation budget. (b) Four-benchmark mean QA accuracy with Qwen3-VL-8B and 64 input frames. The scoring budget counts frame--query evaluations per second of video.}
    \label{fig:intro-overview}
\end{figure}


This constraint motivates us to adaptively select which frames to be scored rather than fixing their locations uniformly, to meet the relevance-scoring budget while capturing the question-relevant frames, as illustrated in Figure~\ref{fig:intro-overview}. 
Additionally, simply retaining the highest-scoring frames can concentrate the final input in a few regions, even if scoring has covered the video broadly. As such, a coherent keyframe selection framework must optimize both relevance-scoring and temporal frame coverage. 

Motivated by these two design principles, we propose \textit{FORTE}, an adaptive, training-free framework for key\underline{f}rame \underline{o}ptimization with \underline{r}elevance prediction and \underline{te}mporal coverage. Its two stages, \emph{adaptive relevance scoring} and \emph{global keyframe optimization}, are coordinated by a unified relevance--coverage objective (Figure~\ref{fig:method-overview}). The first stage is an efficient Gaussian-process (GP) relevance predictor \citep{gpml} for unscored frames, exploiting temporal locality and the resulting approximately banded kernel structure to reduce the core GP computation from cubic to linear time. The predictor is updated as new scores are observed, and its estimates are combined with temporal coverage gains to choose additional frames for evaluation. Once the relevance-scoring budget is exhausted, the second stage optimizes the relevance--coverage objective over the final frame set using measured scores. By exploiting the logarithmic gap rewards and temporal ordering, we derive an exact algorithm whose runtime is linear with regard to the candidate-pool size under a fixed final-frame budget.

Experiments on four long-video QA benchmarks show FORTE achieves the highest observed mean accuracy among compared selectors across all tested relevance-scoring and final-frame budgets. Figure~\ref{fig:intro-overview}(b) shows diminishing accuracy gains as the relevance-scoring budget increases, highlighting the importance of allocating evaluations effectively rather than simply scoring more frames.

Our contributions are summarized as follows:
\begin{itemize}[leftmargin=*]
\item We introduce FORTE, an adaptive, training-free framework that coordinates frame-level relevance scoring and final-frame selection under separate budgets. Its unified relevance--coverage objective guides feedback-driven scoring and enables efficient global optimization of the final frame set.

\item We develop an efficient GP relevance predictor for unscored frames, exploiting temporal locality and the resulting approximately banded kernel structure to reduce the core GP computation from cubic to linear time.

\item For global keyframe optimization, we derive an exact algorithm that uses a monotone upper envelope to solve the proposed objective in $\mathcal{O}(KN)$ time, where $N$ is the scored-pool size and $K$ is the final-frame budget.

\item We demonstrate the framework's effectiveness on four long-video QA benchmarks across different relevance-scoring budgets, final-frame budgets, relevance scorers, and downstream models, and examine the trade-off between scoring cost and accuracy.
\end{itemize}

\section{Related Work}
\label{sec:related-work}

\subsection{Video LLMs for Long-Video Understanding}

Video LLMs must retain useful temporal information while processing long visual sequences under finite context and computation limits. LongVILA extends the supported context through long-context training and efficient sequence parallelism \citep{longvila}. MovieChat and MA-LMM aggregate historical observations through memory mechanisms \citep{moviechat,ma-llm}, while LongVU and Video-XL compress visual representations \citep{longvu,video-xl}. While these approaches improve how video information is represented and retained, keyframe selection addresses a complementary question: which frames should be supplied to the model in the first place?

\subsection{Keyframe Selection Methods}

Training-based methods learn frame-selection policies from supervision or feedback \citep{sevila,frame-voyager,refocus}. We focus instead on training-free methods, particularly LLM-guided evidence search and lightweight score-based selection.

\noindent\textbf{LLM-guided evidence search.} These methods use language or multimodal reasoning to determine which evidence to inspect. VideoAgent coordinates iterative retrieval with vision--language tools \citep{videoagent}; VideoTree organizes captioned frames into a query-adaptive hierarchy \citep{videotree}; and A.I.R. combines lightweight candidate proposals with VLM-based analysis \citep{air}. Such workflows can exploit semantic judgments beyond frame--query similarity, but often incur substantial computational overhead from additional captioning, analysis, or reasoning calls during evidence selection.

\noindent\textbf{Score-based frame selection.} A lightweight alternative uses pretrained vision--language models, such as CLIP and BLIP \citep{clip,blip}, to score frame--query relevance. These methods typically score frames sampled uniformly throughout the video and use the resulting relevance scores to select input frames for the MLLM. Beyond simply retaining the highest-scoring frames, BOLT~\citep{bolt} samples frames according to temporal relevance mass, Q-Frame~\citep{q-frame} combines relevance-based sampling with resolution adaptation, and AdaQ~\citep{adaq} adjusts its sampling distribution based on score statistics. AKS~\citep{aks} balances relevance with temporal coverage, while TCS~\citep{tcs} combines relevance-focused clips with sparse global context. FOCUS~\citep{focus} estimates clip-level mean relevance together with a confidence bound. However, these methods do not effectively estimate frame-level relevance while jointly leveraging temporal coverage to adaptively determine which frames should be scored.

In contrast, FORTE performs adaptive frame-level scoring by updating GP predictions between acquisition batches and balancing predicted relevance against marginal temporal coverage. Moreover, its logarithmic temporal-gap objective admits exact keyframe optimization in linear time via a monotone upper-envelope algorithm.

\section{Method}
\label{sec:method}

\subsection{Problem Formulation and Overview of FORTE}

As illustrated in Figure~\ref{fig:method-overview}(a), our framework performs query-aware keyframe selection to provide a downstream MLLM with a small set of input frames for long-video question answering.

Given a video spanning $[0,T]$ and a question $q$, let $\mathcal{U}\subseteq[0,T]$ denote the timestamps of all frames in the original video, with $x_t$ denoting the frame at timestamp $t$. We use a pretrained scorer $\psi$ to evaluate frame--question relevance and denote the normalized relevance score of $x_t$ by $r_t\in[0,1]$. We distinguish two budgets: the \emph{scoring budget} $B$ limits how many frames can be evaluated by $\psi$, whereas the \emph{final frame budget} $K$ specifies how many frames are supplied to the MLLM.

The task involves two coupled decisions: which frames to score and which scored frames to retain. The first constructs a scored candidate set $\mathcal{O}\subseteq\mathcal{U}$ with $|\mathcal{O}|\leq B$; the second selects $\mathcal{S}\subseteq\mathcal{O}$ with $|\mathcal{S}|=K$. 
Candidate construction thus determines the frames available for final selection. We describe the case $|\mathcal{O}|\geq K$ below; 
Appendix~\ref{app:completion-policy} specifies how shorter candidate pools are handled.

As shown in Figure~\ref{fig:method-overview}, FORTE consists of two stages coordinated by a unified relevance--coverage objective. \emph{Adaptive relevance scoring} begins by scoring a sparse set of uniformly spaced frames. It then predicts relevance for unscored frames and balances these predictions with temporal coverage to allocate the remaining scoring budget. \emph{Global keyframe optimization} determines the final $K$ frames by balancing measured relevance with coverage of the retained subset. Coverage thus guides both the construction of the scored candidate pool and the selection of the final input. The scorer and downstream MLLM remain frozen throughout this training-free process, while $B$ and $K$ can be specified independently.


\begin{figure}[t]
    \centering
\includegraphics[width=\linewidth]{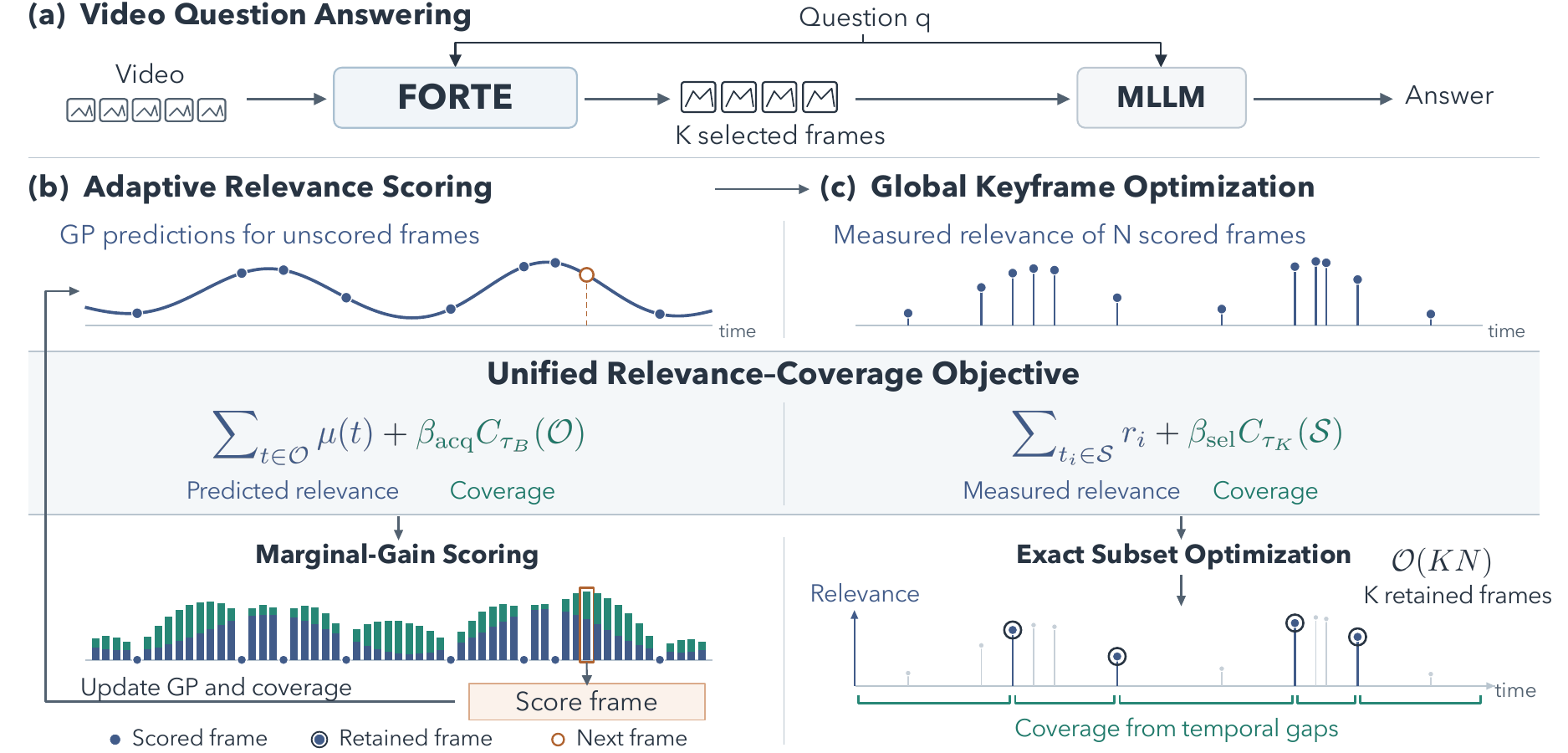}
    \caption{Overview of FORTE. (a) Under a scoring budget $B$, FORTE supplies $K$ frames and the question to a frozen MLLM. Both stages balance summed relevance with temporal coverage: (b) adaptive scoring follows marginal gains under current GP predictions; (c) global keyframe optimization uses measured scores to select the optimal $K$-frame subset from $N$ scored candidates.\vspace{-3pt}}
    \label{fig:method-overview}
\end{figure}

\subsection{Adaptive Relevance Scoring}
\label{sec:acquisition}

\noindent\textbf{Uniform initialization.} To obtain initial relevance observations without favoring any temporal region, we score a sparse set of uniformly spaced frames. By default, we allocate $B/2$ evaluations to this initialization and reserve the remaining budget for adaptive scoring.

\noindent\textbf{Predicting relevance for unscored frames.} To decide where further scoring is useful, we need relevance estimates for frames that have not yet been evaluated. Temporal continuity provides a basis for these estimates: question-relevant content can persist across successive frames, allowing measured scores to inform predictions for nearby frames. We develop an efficient Gaussian-process (GP) relevance predictor \citep{gpml} that models this dependence and exploits the temporal kernel structure to accelerate prediction. It estimates relevance from the observed scores and timestamps without additional calls to the scorer.

Let $\mathcal{O}$ denote the timestamps of the frames scored so far and $\mathbf r$ the corresponding relevance-score vector. We model temporal dependence with a radial basis function (RBF) kernel matrix $\mathbf K$, whose pairwise entries for observed timestamps $t_i,t_j\in\mathcal{O}$ are 
\vspace{-2pt}
\begin{equation}
    [\mathbf K]_{ij}=k(t_i,t_j)=\exp\!\left(-\frac{(t_i-t_j)^2}{2\ell^2}\right),
    \label{eq:rbf-kernel}
\end{equation}
where $\ell$ sets the temporal correlation scale. For an unscored frame at timestamp $t$, the vector $\mathbf k(t)$ relates $t$ to the observed timestamps through $[\mathbf k(t)]_i=k(t,t_i)$. The frame's relevance estimate is the posterior mean of GP regression:
\begin{equation}
    \mu(t)=\mu_{\mathrm{obs}}+\mathbf k(t)^\top
    (\mathbf K+\epsilon\mathbf{I})^{-1}(\mathbf r-\mu_{\mathrm{obs}}\mathbf 1).
    \label{eq:gp-mean}
\end{equation}
Here, $\mathbf 1$ is the all-ones vector. With a zero-mean GP prior, predicted relevance tends toward zero in regions with no nearby scored frames, where $\mathbf k(t)$ is small. This would make unobserved regions appear irrelevant and could discourage further scoring there. We therefore set $\mu_{\mathrm{obs}}$ to the mean of $\mathbf r$, allowing predictions in these regions to revert to the observed average. Nearby timestamps can produce very similar kernel rows, making $\mathbf K$ nearly singular. Thus, we add $\epsilon\mathbf{I}$ for numerical stability, where $\mathbf{I}$ is the identity matrix and $\epsilon>0$ is a small diagonal adjustment.

Dense GP fitting requires cubic time in the number of observed frames. This cost can be reduced by exploiting the temporal locality induced by the kernel. The RBF kernel decays rapidly with temporal separation, giving the time-ordered kernel matrix an approximately banded structure. We exploit this structure by retaining a band of local interactions and restricting prediction to neighboring observations. For a fixed neighborhood width, the banded solve requires linear time and memory in the number of observations, and each prediction requires constant arithmetic work. Unless otherwise stated, GP prediction in our experiments uses this efficient approximation. Appendix~\ref{app:gp-approximation} provides the implementation and complexity details.

\noindent\textbf{Balancing relevance and temporal coverage.}
\label{sec:coverage}
Predicted relevance helps identify promising frames, but relying on it alone may concentrate evaluations in a small part of the video. The same issue arises when selecting the final input from measured scores. We therefore design the coverage term to serve both stages, with two requirements. First, at a fixed frame count, it should favor balanced temporal spacing. Second, the coverage gained by adding a frame should diminish as nearby frames accumulate, encouraging choices from less represented regions.

For an ordered timestamp set $\mathcal{S}=\{s_1<\cdots<s_m\}\subseteq[0,T]$, let $s_0=0$ and $s_{m+1}=T$ denote the video boundaries. The temporal gaps are $g_i(\mathcal{S})=s_{i+1}-s_i$ for $i=0,\ldots,m$. We propose the following gap-based coverage objective:
\begin{equation}
C_\tau(\mathcal{S})=\sum_{i=0}^{m}\log\!\left(1+\frac{g_i(\mathcal{S})}{\tau}\right),
    \label{eq:coverage-global}
\end{equation}
where $\tau>0$ sets the reference time scale. The first property follows from the concavity of the logarithm, as formalized below.

\begin{theorem}[Uniform-spacing optimality]
\label{thm:uniform-coverage}
Among all $m$-point subsets of $[0,T]$, $C_\tau$ is maximized when the $m+1$ boundary-inclusive gaps are equal: $g_i=T/(m+1)$.
\end{theorem}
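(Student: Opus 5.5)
The plan is to reduce the statement to a concave maximization over the simplex of gap vectors and then apply Jensen's inequality. First I will reparametrize. An $m$-point subset $\mathcal{S}=\{s_1<\cdots<s_m\}\subseteq[0,T]$ corresponds exactly to a gap vector $(g_0,\ldots,g_m)$ with $g_i\ge 0$ and $\sum_{i=0}^m g_i = s_{m+1}-s_0 = T$; the sum telescopes. Conversely, any such nonnegative vector with positive interior gaps defines a valid subset through $s_j=\sum_{i<j} g_i$. I will allow degenerate gaps $g_i=0$, which correspond to a point at a boundary or coincident points, so that the feasible set is the closed simplex $\Delta=\{g\in\mathbb{R}_{\ge0}^{m+1}:\sum_i g_i=T\}$. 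This only enlarges the domain, so the maximizer found there also serves the original problem as long as it is itself a genuine $m$-point subset. It will be: at the maximizer all gaps are positive.

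Next, set $f(x)=\log(1+x/\tau)$ on $[0,\infty)$. This function is strictly concave, since $f''(x)=-1/(\tau+x)^2<0$. We then have $C_\tau(\mathcal{S})=\sum_{i=0}^m f(g_i)$. By Jensen's inequality for the uniform weights $1/(m+1)$,
\[
\frac{1}{m+1}\sum_{i=0}^m f(g_i)\le f\!\left(\frac{1}{m+1}\sum_{i=0}^m g_i\right)=f\!\left(\frac{T}{m+1}\right),
\]
so $C_\tau(\mathcal{S})\le (m+1)\log\!\left(1+\frac{T}{(m+1)\tau}\right)$. This bound is attained by the equally spaced set $s_j=jT/(m+1)$, $j=1,\ldots,m$, which is a genuine $m$-point subset of $[0,T]$. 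Strict concavity gives equality in Jensen's inequality only when all $g_i$ are equal, so uniform spacing is the unique maximizer. An alternative route is a Lagrange/KKT argument: the condition $f'(g_i)=\lambda$ together with the injectivity of $f'$ forces equal gaps, and concavity makes this stationary point the global maximum.

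I do not expect a real obstacle. The only care needed is the bijection between subsets and gap vectors, including the handling of the boundary terms $s_0=0$ and $s_{m+1}=T$ and of degenerate configurations, so that the simplex constraint $\sum_i g_i=T$ is justified. The uniqueness claim also needs strict rather than mere concavity.
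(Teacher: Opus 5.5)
Your proposal is correct and matches the paper's proof: strict concavity of $\log(1+g/\tau)$, Jensen's inequality over the $m+1$ gaps summing to $T$, equality exactly when all gaps are equal, and attainment at $s_j=jT/(m+1)$. Your explicit treatment of the subset--gap-vector correspondence, including degenerate boundary gaps, is a harmless refinement that the paper leaves implicit.
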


Although coverage alone is maximized by evenly spaced frames, adaptive scoring balances this preference with predicted relevance, allowing more evaluations in promising regions. For each candidate timestamp $t$, we evaluate the coverage gained by adding it to the current set, $\Delta C_\tau(t\mid\mathcal{S})=C_\tau(\mathcal{S}\cup\{t\})-C_\tau(\mathcal{S})$. This gain depends only on the gap containing $t$, so it can be computed without reevaluating coverage over the entire set. It satisfies the second property:
\begin{theorem}[Coverage gains and diminishing returns]
\label{thm:marginal-equivalence}
Given an interval $[0,T]$, a scale $\tau>0$, and a finite timestamp set $\mathcal{U}\subseteq[0,T]$, adding $t\in\mathcal{U}\setminus\mathcal{S}$ to $\mathcal{S}\subseteq\mathcal{U}$ yields a nonnegative marginal coverage gain $\Delta C_\tau(t\mid\mathcal{S})$ that depends only on the gap containing $t$. For any $\mathcal{A}\subseteq\mathcal{B}\subseteq\mathcal{U}$ and $t\in\mathcal{U}\setminus\mathcal{B}$, the gains satisfy
$\Delta C_\tau(t\mid\mathcal{A})
\geq
\Delta C_\tau(t\mid\mathcal{B}).$
Consequently, $C_\tau$ is monotone and submodular on $\mathcal{U}$.
\end{theorem}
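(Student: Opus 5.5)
The plan is to write the marginal gain in closed form and then compare gap lengths. Fix $\mathcal{S}\subseteq\mathcal{U}$ and $t\in\mathcal{U}\setminus\mathcal{S}$. Let $a<t<b$ be the nearest neighbors of $t$ in $\mathcal{S}\cup\{0,T\}$. If $t$ is itself a boundary point ($t=0$ or $t=T$), the containing gap is degenerate; this case is handled below. Inserting $t$ replaces the single gap $g=b-a$ by the two gaps $u=t-a$ and $v=b-t$, with $u+v=g$. All other gaps are unchanged, so
\[
\Delta C_\tau(t\mid\mathcal{S})=\log\!\Bigl(1+\tfrac{u}{\tau}\Bigr)+\log\!\Bigl(1+\tfrac{v}{\tau}\Bigr)-\log\!\Bigl(1+\tfrac{u+v}{\tau}\Bigr)=\log\frac{(\tau+u)(\tau+v)}{\tau(\tau+u+v)}.
\]
This proves the locality claim. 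Nonnegativity follows from the identity $(\tau+u)(\tau+v)-\tau(\tau+u+v)=uv\ge 0$. Equivalently, the map $x\mapsto\log(1+x/\tau)$ is concave and vanishes at $0$, hence subadditive on $[0,\infty)$. If $t=0$ or $t=T$, one of $u,v$ is $0$, so the gain is $0$, which is consistent with the formula.

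For diminishing returns, take $\mathcal{A}\subseteq\mathcal{B}$ and $t\notin\mathcal{B}$. Since $\mathcal{B}$ has at least the points of $\mathcal{A}$, the neighbors of $t$ in $\mathcal{B}\cup\{0,T\}$ are at least as close as those in $\mathcal{A}\cup\{0,T\}$. Hence $u_{\mathcal{B}}\le u_{\mathcal{A}}$ and $v_{\mathcal{B}}\le v_{\mathcal{A}}$. It therefore suffices to show that
\[
h(u,v)=\log(\tau+u)+\log(\tau+v)-\log(\tau+u+v)
\]
is nondecreasing in each argument on $[0,\infty)^2$. Differentiating in $u$ gives $\frac{1}{\tau+u}-\frac{1}{\tau+u+v}\ge 0$ since $v\ge0$, and the same argument applies to $v$ by symmetry. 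Moving the two arguments down one at a time gives $h(u_{\mathcal{A}},v_{\mathcal{A}})\ge h(u_{\mathcal{B}},v_{\mathcal{A}})\ge h(u_{\mathcal{B}},v_{\mathcal{B}})$.

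Monotonicity of $C_\tau$ follows from nonnegative gains by adding the elements of $\mathcal{B}\setminus\mathcal{A}$ one at a time. Submodularity is exactly the diminishing-returns inequality for all $\mathcal{A}\subseteq\mathcal{B}$ and $t\notin\mathcal{B}$.

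The only delicate point is bookkeeping rather than analysis. One must make sure the neighbor comparison is valid when the neighbors coincide with the boundaries $0$ and $T$. One must also account for timestamps in $\mathcal{U}$ that equal $0$ or $T$, which produce zero-length gaps; these contribute $\log 1=0$ and do not change the argument. I would state the convention that the boundary points are always included in the neighbor search, so that $a$ and $b$ are well defined in every case.
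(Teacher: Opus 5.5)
Your proposal is correct and follows essentially the same route as the paper: you derive the local closed form $\log\bigl((\tau+u)(\tau+v)/(\tau(\tau+u+v))\bigr)\ge 0$ via $uv\ge0$, show it is nondecreasing in each neighbor distance through the same partial derivatives, and note that passing to a superset can only shrink both distances. Your explicit handling of boundary timestamps ($t=0$ or $t=T$) and the element-by-element argument for monotonicity are harmless refinements that the paper leaves implicit.
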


As neighboring frames are added, further relevance evaluations in the same region receive less coverage reward. The local expression for this gain and all theorem proofs are given in Appendix~\ref{app:proofs}.

Given the current observations, we choose the next frame by its estimated marginal contribution to the relevance--coverage objective, combining predicted relevance with the additional coverage it would provide:
\begin{equation}
    t^\star={\arg\max}_{t\in\mathcal{U}\setminus\mathcal{O}}
    \left[\mu(t)+\beta_{\mathrm{acq}}\Delta C_{\tau_B}(t\mid\mathcal{O})\right],
    \label{eq:acquisition}
\end{equation}
where $\beta_{\mathrm{acq}}\geq0$ weights the coverage gain and $\tau_B=T/(B+1)$ sets the reference spacing for $B$ scored frames. We evaluate $\psi(x_{t^\star},q)$, add $t^\star$ to $\mathcal{O}$, and incorporate its measured relevance into the GP. Before choosing another frame, we update the predictions and coverage gains for the remaining unscored frames. Each new observation thus informs the next scoring decision, while the coverage reward decreases around newly scored frames. This select--score--update process, as illustrated in Figure~\ref{fig:method-overview}(b), continues until the scoring budget is exhausted.

For computational efficiency, our experiments use batched scoring, with GP updates only between batches. Within each batch, frames are chosen sequentially using fixed GP predictions, but coverage gains are updated after every choice to discourage selecting nearby frames within the same batch. Each candidate's coverage gain depends only on its neighboring selected timestamps and can be computed using $\mathcal{O}(1)$ arithmetic operations. We then score the chosen frames in parallel through batched forward passes and update the GP with their measured relevance before constructing the next batch. The batch schedule and observation updates are detailed in Appendix~\ref{app:scoring}.

\subsection{Global Keyframe Optimization}
\label{sec:final-selection}

Even when the scored candidates are distributed across the video, retaining only the highest-scoring frames can concentrate the final input in a few temporal regions. We therefore optimize the unified relevance--coverage objective over the retained subset, balancing measured relevance with coverage of the frames actually supplied to the MLLM. Let $\mathcal{O}=\{t_1<\cdots<t_N\}$ be the scored pool, and write $r_i=r_{t_i}$ for the measured relevance at $t_i$. We define the final selection objective $F$ and maximize it over $K$-frame subsets:
\begin{equation}
F(\mathcal{S}) = \sum_{t_i\in\mathcal{S}}r_i+\beta_{\mathrm{sel}}C_{\tau_K}(\mathcal{S}),\quad\quad
\mathcal{S}^{\star} = {\arg\max}_{\mathcal{S}\subseteq\mathcal{O},\,|\mathcal{S}|=K}F(\mathcal{S}),
    \label{eq:final-objective}
\end{equation}
where $\beta_{\mathrm{sel}}\geq0$ balances relevance and coverage, and $\tau_K=T/(K+1)$ is the reference spacing for the final $K$ frames. Both stages thus use an additive relevance term and the same coverage function: Equation~\ref{eq:acquisition} estimates the marginal gain from scoring another frame, while $F$ evaluates the complete retained subset using measured scores.

\noindent\textbf{Exact algorithm.} The objective $F$ decomposes into per-frame relevance and coverage rewards for consecutive temporal gaps. This structure allows us to optimize it exactly using dynamic programming, which extends optimal partial selections in temporal order. Let $D[m,i]$ be the best partial objective for $m$ selected frames ending at $t_i$, excluding the final gap to $T$. Appending $t_i$ to a subset ending at $t_j$ adds its relevance $r_i$ and the coverage reward for the gap $t_i-t_j$, giving the initialization and recurrence
\begin{equation}
    \begin{aligned}     D[1,i]&=r_i+\beta_{\mathrm{sel}}\log\!\left(1+\frac{t_i}{\tau_K}\right),
        &&1\leq i\leq N,\\
        D[m,i]&=r_i+\max_{m-1\leq j<i}
        \left[D[m-1,j]+\beta_{\mathrm{sel}}\log\!\left(1+\frac{t_i-t_j}{\tau_K}\right)\right],
        &&\begin{gathered}2\leq m\leq K,\\m\leq i\leq N.\end{gathered}
    \end{aligned}
    \label{eq:dp-transition-main}
\end{equation}
We add the final-gap reward $\beta_{\mathrm{sel}}\log(1+(T-t_i)/\tau_K)$ to each feasible $D[K,i]$ and backtrack from the best final state to recover the selected frames.

\noindent\textbf{Linear-time evaluation of each DP layer.} Scanning every predecessor in Equation~\ref{eq:dp-transition-main} would require $\mathcal{O}(KN^2)$ time. We avoid this scan by exploiting the logarithmic gap reward. For $\beta_{\mathrm{sel}}>0$, exponentiation preserves the ordering of transition values, so finding the best predecessor is equivalent to finding the largest value among the lines
\begin{equation}
    L_j(x)=\exp\!\left(\frac{D[m-1,j]}{\beta_{\mathrm{sel}}}\right)(x+\tau_K-t_j)
    \quad\text{at }x=t_i.
    \label{eq:dp-lines-main}
\end{equation}
An earlier predecessor with an equal or larger partial objective always dominates a later one, so only predecessors that establish a new maximum partial objective need to be retained. Their line slopes increase in temporal order, as do the query timestamps. We therefore maintain their upper envelope, the pointwise maximum of these lines, in a double-ended queue. Each predecessor enters and leaves the queue at most once per layer, giving $\mathcal{O}(N)$ work per layer and $\mathcal{O}(KN)$ time overall.

\begin{theorem}[Global optimality and complexity]
\label{thm:dp-optimality}
For a fixed, temporally ordered scored pool with $N\geq K$ and $\beta_{\mathrm{sel}}\geq0$, the exact algorithm returns a global maximizer of $F$ over all $K$-frame subsets in $\mathcal{O}(KN)$ time, using $\mathcal{O}(KN)$ storage including backtracking indices.
\end{theorem}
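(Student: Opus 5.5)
The proof has two parts: correctness of the dynamic program in Equation~\ref{eq:dp-transition-main}, and the claim that each layer can be evaluated in $\mathcal{O}(N)$ time by the envelope method.

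\textbf{Correctness of the DP.} Write an ordered subset $\mathcal{S}=\{t_{i_1}<\cdots<t_{i_K}\}\subseteq\mathcal{O}$. Because the boundaries $s_0=0$ and $s_{K+1}=T$ are fixed, $F(\mathcal{S})$ splits into three parts: the prefix terms $r_{i_1}+\beta_{\mathrm{sel}}\log(1+t_{i_1}/\tau_K)$, the transition terms $r_{i_m}+\beta_{\mathrm{sel}}\log(1+(t_{i_m}-t_{i_{m-1}})/\tau_K)$, and the final-gap term. I would prove by induction on $m$ the invariant that $D[m,i]$ is the maximum, over increasing index sequences $i_1<\cdots<i_m=i$, of the partial objective with the final gap omitted. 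The constraint $j\ge m-1$ only discards sequences too short to exist, so the invariant is exact. The inductive step is the usual principle of optimality: the partial objective equals $r_i$, plus the partial objective ending at $t_j$, plus a gap term that depends only on $(j,i)$. Adding the final-gap term and maximizing over $i\ge K$ then gives $\max F$. Backtracking through stored argmax indices recovers a maximizer. When $\beta_{\mathrm{sel}}=0$ the DP is still correct, and each layer can be done by a running prefix maximum in $\mathcal{O}(N)$ time, so I treat that case separately.

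\textbf{Reduction to lines for $\beta_{\mathrm{sel}}>0$.} Since $\exp(\cdot/\beta_{\mathrm{sel}})$ is strictly increasing, the maximizing $j$ for $D[m-1,j]+\beta_{\mathrm{sel}}\log(1+(t_i-t_j)/\tau_K)$ also maximizes $\tau_K^{-1}L_j(t_i)$, with $L_j$ as in Equation~\ref{eq:dp-lines-main}. Each $L_j$ is a line in $x$ with slope $a_j=e^{D[m-1,j]/\beta_{\mathrm{sel}}}$ and positive value $a_j(x+\tau_K-t_j)$ for all $x\ge t_j$.

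\textbf{Dominance lemma.} For $j<j'$ with $a_j\ge a_{j'}$, we have $t_j<t_{j'}$, so $L_j(x)\ge L_{j'}(x)$ for every $x$, and in particular for every query $x=t_i>t_{j'}$. Therefore only the prefix-maximum records, the $j$ with $a_j$ strictly above all earlier slopes, need to be kept. Their slopes increase with $j$.

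\textbf{Envelope maintenance.} Predecessors are inserted in order of $j$, and queries $x=t_i$ are made in increasing order. Before each query at $t_i$, all $j<i$ with $j\ge m-2$ have been offered for insertion. A record line has the largest slope seen so far, so it is added at the back of the deque. Back elements that become irrelevant to the upper envelope are removed by the standard three-line intersection test, which for increasing slopes makes the envelope convex. Because queries increase, the line that is optimal at the front moves monotonically toward the back, so front elements can be discarded once their successor is at least as large at the current $t_i$.

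\textbf{Main obstacle.} The hardest step is justifying front-popping. I need that the lower-envelope optimality regions are intervals ordered by slope. I also need that lines with $t_j$ below the query are valid for every later query, which holds because $t_j<t_i\le t_{i'}$. With that in place, amortization gives at most one push and one pop per index per layer, so each layer costs $\mathcal{O}(N)$ and the full algorithm costs $\mathcal{O}(KN)$.

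\textbf{Numerics and storage.} The slopes can be computed in log-space, for example by comparing $D$ values directly and comparing intersection points through logs, without affecting exactness in exact arithmetic. Storing $D$ and the argmax indices for all layers takes $\mathcal{O}(KN)$ memory.
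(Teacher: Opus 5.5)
Your proposal is correct and follows essentially the same route as the paper. Both arguments prove the DP exact by induction, reduce predecessor search to line queries by exponentiation, discard non-record predecessors by dominance (with the running maximum over all earlier predecessors), and maintain a monotone deque with amortized $\mathcal{O}(N)$ work per layer; your prefix-maximum handling of $\beta_{\mathrm{sel}}=0$ is a harmless variant of the paper's direct top-$K$ selection. Two small slips: the dominance $L_j(x)\ge L_{j'}(x)$ holds only for $x\ge t_{j'}-\tau_K$, not for every $x$ (this suffices, since queries satisfy $t_i>t_{j'}$), and the optimality regions you need are those of the upper envelope, not the lower one.
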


Appendix~\ref{app:proof-dp-optimality} gives the derivation, proof, and pseudocode, including evaluation without forming large exponentials and the $\beta_{\mathrm{sel}}=0$ case. After optimization, the selected frames are passed to the downstream MLLM in chronological order to generate the answer.

\begin{table}[t]
    \centering
    \caption{Long-video QA accuracy (\%) with Qwen3-VL-8B and $K=64$ input frames. Budget $\rho$ counts frame--query evaluations per second of video; FOCUS is shown at its effective budget, and ``Uniform'' requires no scoring. Avg.\ is the unweighted four-benchmark mean. Best results within each budget block are bold, including ties; green parentheses show gains over ``Uniform''.}
    \label{tab:main-results}
    \small
    \setlength{\tabcolsep}{2pt}
    \newcommand{\tablevenue}[1]{\hspace{0.4em}{\scriptsize(#1)}}
    \newlength{\tablegainwidth}
    \settowidth{\tablegainwidth}{{\footnotesize(+10.4)}}
    \newcommand{\tablegain}[1]{\makebox[\tablegainwidth][l]{{\footnotesize\textcolor{green!45!black}{(#1)}}}}
    \newcommand{\tablehead}[1]{\makebox[0pt][c]{#1}}
    \begin{tabular*}{\linewidth}{@{\extracolsep{\fill}}lc*{4}{c@{\extracolsep{0pt}\,}l@{\extracolsep{\fill}}}c@{\extracolsep{0pt}\,}l@{}}
        \toprule
        Method & Budget $\rho$ & \tablehead{LongVB} & & \tablehead{Video-MME} & & \tablehead{LVBench} & & \tablehead{MLVU} & & \tablehead{Avg.} & \\
        \midrule
        Uniform & 0 & 62.4 & & 67.7 & & 41.0 & & 70.3 & & 60.3 & \\
        \midrule
        Top-K & 0.1 & 63.4 & & 68.7 & & 46.6 & & 70.8 & & 62.4 & \\
        AKS\tablevenue{CVPR~'25} & 0.1 & 63.2 & & 69.1 & & 45.6 & & 71.0 & & 62.2 & \\
        BOLT\tablevenue{CVPR~'25} & 0.1 & 59.3 & & 63.4 & & 46.7 & & 71.9 & & 60.4 & \\
        Q-Frame\tablevenue{ICCV~'25} & 0.1 & 62.9 & & 69.1 & & 42.9 & & 70.9 & & 61.5 & \\
        AdaQ\tablevenue{arXiv~'26} & 0.1 & 63.4 & & 68.9 & & 46.7 & & 70.8 & & 62.4 & \\
        WFS-SB\tablevenue{CVPR~'26} & 0.1 & 63.2 & & 68.3 & & 46.9 & & 71.3 & & 62.4 & \\
        KFC\tablevenue{ICLR~'26} & 0.1 & 62.5 & & 68.2 & & 47.6 & & 71.4 & & 62.4 & \\
        \textbf{Ours} & 0.1 & \textbf{64.9} & \tablegain{+2.5} & \textbf{69.2} & \tablegain{+1.5} & \textbf{49.0} & \tablegain{+8.0} & \textbf{72.0} & \tablegain{+1.7} & \textbf{63.8} & \tablegain{+3.5} \\
        \midrule
        Top-K & 0.5 & 65.7 & & 68.1 & & 50.6 & & 73.8 & & 64.6 & \\
        AKS\tablevenue{CVPR~'25} & 0.5 & 65.1 & & 69.0 & & 47.8 & & 74.5 & & 64.1 & \\
        BOLT\tablevenue{CVPR~'25} & 0.5 & 63.9 & & 69.4 & & 49.2 & & 75.1 & & 64.4 & \\
        Q-Frame\tablevenue{ICCV~'25} & 0.5 & 64.4 & & 67.6 & & 43.5 & & 72.2 & & 61.9 & \\
        AdaQ\tablevenue{arXiv~'26} & 0.5 & 66.2 & & 68.3 & & 50.2 & & 73.7 & & 64.6 & \\
        WFS-SB\tablevenue{CVPR~'26} & 0.5 & 65.5 & & 68.4 & & 50.6 & & 74.7 & & 64.8 & \\
        FOCUS\tablevenue{ICLR~'26} & $\sim\!0.44$ & 64.1 & & 65.3 & & 48.7 & & 71.3 & & 62.4 & \\
        KFC\tablevenue{ICLR~'26} & 0.5 & 65.5 & & 67.8 & & 46.4 & & 74.9 & & 63.6 & \\
        \textbf{Ours} & 0.5 & \textbf{68.1} & \tablegain{+5.7} & \textbf{69.9} & \tablegain{+2.2} & \textbf{50.7} & \tablegain{+9.7} & \textbf{75.7} & \tablegain{+5.4} & \textbf{66.1} & \tablegain{+5.8} \\
        \midrule
        Top-K & 1.0 & 65.9 & & 67.5 & & 51.2 & & 74.1 & & 64.7 & \\
        AKS\tablevenue{CVPR~'25} & 1.0 & 66.2 & & 69.0 & & 49.1 & & 73.6 & & 64.5 & \\
        BOLT\tablevenue{CVPR~'25} & 1.0 & 65.6 & & 69.6 & & 49.5 & & 75.1 & & 65.0 & \\
        Q-Frame\tablevenue{ICCV~'25} & 1.0 & 64.4 & & 68.2 & & 43.9 & & 72.3 & & 62.2 & \\
        AdaQ\tablevenue{arXiv~'26} & 1.0 & 66.6 & & 67.8 & & 50.9 & & 74.3 & & 64.9 & \\
        WFS-SB\tablevenue{CVPR~'26} & 1.0 & 67.6 & & 68.5 & & 51.0 & & 75.5 & & 65.6 & \\
        KFC\tablevenue{ICLR~'26} & 1.0 & 66.0 & & 67.8 & & 45.4 & & 73.8 & & 63.3 & \\
        \textbf{Ours} & 1.0 & \textbf{68.5} & \tablegain{+6.1} & \textbf{69.9} & \tablegain{+2.2} & \textbf{51.4} & \tablegain{+10.4} & \textbf{75.8} & \tablegain{+5.5} & \textbf{66.4} & \tablegain{+6.1} \\
        \bottomrule
    \end{tabular*}
\end{table}

\section{Experiments}
\label{sec:experiments}

\subsection{Experimental Setup}

\noindent\textbf{Benchmarks and metric.} We evaluate on four complementary long-video QA benchmarks. LongVideoBench \citep{longvideobench} emphasizes evidence retrieval and referring reasoning; Video-MME \citep{videomme} spans diverse domains and video durations; LVBench \citep{lvbench} focuses on information integration over hour-scale videos; and MLVU \citep{mlvu} covers multiple understanding tasks and video genres. Together, they test both localized evidence retrieval and reasoning across extended temporal contexts. We report multiple-choice accuracy (\%) over all evaluated questions in each benchmark. \emph{Avg.} denotes the unweighted four-benchmark mean, also used in the ablation and generalization studies.

\noindent\textbf{Models and relevance scorers.} We evaluate four representative video LLMs: LLaVA-OneVision (LLaVA-OV) \citep{llava-onevision}, LLaVA-Video \citep{llava-video}, Qwen3-VL \citep{qwen3vl}, and InternVL3.5 \citep{internvl35}. Relevance scorers include BLIP \citep{blip}, BLIP-2 \citep{blip2}, and CLIP \citep{clip}. Unless otherwise stated, we use Qwen3-VL-8B-Instruct as the downstream model and BLIP as the relevance scorer in our experiments. We use greedy decoding for downstream answer generation, without stochastic token sampling.

\noindent\textbf{Scoring and frame budgets.} For a $T$-second video, we allocate $B\approx\rho T$ frame--query evaluations, where $\rho$ denotes the scoring allowance per second of video. The main comparison uses $\rho\in\{0.1,0.5,1.0\}$ and a final frame budget of $K=64$ for the MLLM. Baseline scoring protocols and budget accounting are detailed in Appendix~\ref{app:baseline-implementations}.

\subsection{Main Results}

\noindent\textbf{Baselines.} For controlled comparisons, we reimplement baseline frame-selection rules within a common evaluation pipeline, using the same downstream MLLM, relevance scorer, and final-frame budget. The Uniform baseline samples $K$ evenly spaced frames without relevance scoring. Top-K retains the $K$ highest-scoring observed frames. AKS~\citep{aks} recursively balances query relevance and temporal coverage. BOLT~\citep{bolt} uses deterministic inverse-transform sampling of normalized relevance scores. Q-Frame~\citep{q-frame} applies Gumbel-Max sampling to query--frame relevance. AdaQ~\citep{adaq} adapts its sampling distribution to score variance and samples from a top-$p$ relevance set. WFS-SB~\citep{wfssb} detects semantic boundaries in the relevance trajectory and selects diverse frames within the resulting segments. KFC, the frame-selection component of Nar-KFC~\citep{narkfc}, jointly optimizes query relevance and visual diversity. FOCUS~\citep{focus} uses clip-level relevance estimates to guide frame selection. Its official scoring schedule yields an effective budget of approximately $0.44$ frame--query evaluations per second of video, so we report it alongside the $\rho=0.5$ results. Appendix~\ref{app:focus-matched-budget} specifies an additional comparison matching FORTE to FOCUS's scoring count for each question.

\noindent\textbf{Analysis.} Table~\ref{tab:main-results} compares selectors at scoring budgets $\rho\in\{0.1,0.5,1.0\}$. Our method achieves the highest observed accuracy on all four benchmarks at every budget. Top-K becomes increasingly competitive as more frames are scored, outperforming several more complex selectors at higher budgets. However, the benefits of additional scoring diminish:
Increasing $\rho$ from $0.5$ to $1.0$ brings only small gains in average accuracy despite roughly doubling the number of scorer evaluations. These results suggest that $\rho=0.5$ offers a practical balance between accuracy and scoring cost.

\subsection{Ablation and Generalization Studies}

Motivated by the diminishing returns beyond $\rho=0.5$, we use this scoring budget and $K=64$ input frames for all ablation and generalization studies unless otherwise stated.

\noindent\textbf{Adaptive relevance scoring and global keyframe optimization.} Table~\ref{tab:component-ablation} compares variants of the two stages. In Stage 1, ``Full'' uses both terms in Eq.~\ref{eq:acquisition}, while ``w/o'' indicates removal of the specified term. 
With Stage 2 fixed to global keyframe optimization, ``Full''
outperforms uniform scoring and either single-term variant, supporting the combined use of predicted relevance and coverage. In Stage 2, global keyframe optimization (``Global opt.'') maximizes the joint relevance--coverage objective in Eq.~\ref{eq:final-objective}, whereas Top-K uses relevance alone. Global keyframe optimization improves on Top-K for
both uniformly and adaptively scored pools, showing that coverage also matters 
for final frame selection.
Appendix~\ref{app:additional-experiments} provides per-benchmark ablations and paired uncertainty estimates.

\begin{table}[t]
    \centering
    \newcommand{\pairedtablehead}[1]{\makebox[0pt][c]{#1}}
    \begin{minipage}[t]{0.41\linewidth}
        \centering
        \caption{Ablation of the two stages.}
        \label{tab:component-ablation}
    \end{minipage}\hfill
    \begin{minipage}[t]{0.57\linewidth}
        \centering
        \caption{Ablation of the final frame budget.}
        \label{tab:frame-budget}
    \end{minipage}
    \par\nointerlineskip
    \begin{minipage}[c]{0.41\linewidth}
        \centering
        \fontsize{8}{10}\selectfont
        \setlength{\tabcolsep}{2pt}
        \begin{tabular*}{\linewidth}{@{\extracolsep{\fill}}llc@{}}
            \toprule
            Stage 1 & Stage 2 & Avg. \\
            \midrule
            Uniform scoring & Top-K & 64.6 \\
            Uniform scoring & Global opt. & 65.2 \\
            w/o relevance & Global opt. & 65.2 \\
            w/o coverage & Global opt. & 64.2 \\
            Full & Top-K & 64.9 \\
            Full & Global opt. & \textbf{66.1} \\
            \bottomrule
        \end{tabular*}
    \end{minipage}\hfill
    \begin{minipage}[c]{0.57\linewidth}
        \centering
        \fontsize{8}{10}\selectfont
        \setlength{\tabcolsep}{2pt}
        \begin{tabular*}{\linewidth}{@{\extracolsep{\fill}}ccccccc@{\hspace{2pt}}}
            \toprule
            $K$ & \pairedtablehead{Uniform} & \pairedtablehead{Top-K} & \pairedtablehead{AKS} & \pairedtablehead{WFS-SB} & \pairedtablehead{KFC} & \pairedtablehead{\textbf{Ours}} \\
            \midrule
            8 & 51.4 & 55.9 & 56.2 & 57.3 & 56.3 & \textbf{58.3} \\
            16 & 54.4 & 58.4 & 58.3 & 59.6 & 59.1 & \textbf{60.5} \\
            32 & 58.2 & 60.8 & 60.7 & 62.2 & 60.9 & \textbf{63.6} \\
            64 & 60.3 & 64.6 & 64.1 & 64.8 & 63.6 & \textbf{66.1} \\
            \bottomrule
        \end{tabular*}
    \end{minipage}
\end{table}

\begin{table}[!tbp]
    \centering
    \newcommand{\pairedtablehead}[1]{\makebox[0pt][c]{#1}}
    \begin{minipage}[t]{0.41\linewidth}
        \centering
        \caption{Generalization across scorers.}
        \label{tab:scorer-ablation}
    \end{minipage}\hfill
    \begin{minipage}[t]{0.57\linewidth}
        \centering
     \caption{Generalization across downstream MLLMs.}
        \label{tab:backbone-transfer}
    \end{minipage}
    \par\nointerlineskip
    \begin{minipage}[c]{0.41\linewidth}
        \centering
        \fontsize{8}{10}\selectfont
        \setlength{\tabcolsep}{2pt}
        \begin{tabular*}{\linewidth}{@{\extracolsep{\fill}}lccccc@{\hspace{2pt}}}
            \toprule
            Scorer & \pairedtablehead{Top-K} & \pairedtablehead{AKS} & \pairedtablehead{WFS-SB} & \pairedtablehead{KFC} & \pairedtablehead{\textbf{Ours}} \\
            \midrule
            BLIP & 64.6 & 64.1 & 64.8 & 63.6 & \textbf{66.1} \\
            CLIP & 64.4 & 62.7 & 63.9 & 63.8 & \textbf{65.0} \\
            BLIP-2 & 64.4 & 62.4 & 64.3 & 64.1 & \textbf{65.1} \\
            \bottomrule
        \end{tabular*}
    \end{minipage}\hfill
    \begin{minipage}[c]{0.57\linewidth}
        \centering
        \fontsize{8}{10}\selectfont
        \setlength{\tabcolsep}{2pt}
        \begin{tabular*}{\linewidth}{@{\extracolsep{\fill}}lcccccc@{\hspace{2pt}}}
            \toprule
            MLLM & \pairedtablehead{Uniform} & \pairedtablehead{Top-K} & \pairedtablehead{AKS} & \pairedtablehead{WFS-SB} & \pairedtablehead{KFC} & \pairedtablehead{\textbf{Ours}} \\
            \midrule
            Qwen3-VL-8B & 60.3 & 64.6 & 64.1 & 64.8 & 63.6 & \textbf{66.1} \\
            LLaVA-OV-7B & 55.6 & 58.3 & 58.0 & 58.3 & 58.2 & \textbf{59.3} \\
            LLaVA-Video-7B & 59.4 & 62.0 & 61.3 & 61.8 & 61.0 & \textbf{62.6} \\
            InternVL3.5-8B & 60.7 & 62.8 & 62.7 & 62.9 & 62.5 & \textbf{63.8} \\
            \bottomrule
        \end{tabular*}
    \end{minipage}\vspace{-2pt}
\end{table}

\noindent\textbf{Final frame budget.} 
Table~\ref{tab:frame-budget} compares Uniform, Top-K, AKS, WFS-SB, KFC, and our method across final frame budgets. All six improve with more input frames, but ours achieves the highest observed mean accuracy at every tested $K$. With 16 frames, it reaches the same reported mean accuracy as Uniform with 64, suggesting that question-aware selection can preserve accuracy with substantially fewer input frames.

\noindent\textbf{Parameter sensitivity.} We use a single shared configuration of heuristic defaults without development-set tuning (Appendix~\ref{app:implementation}) and keep it fixed across all main experiments. To assess sensitivity, we vary the GP length scale, both coverage weights, the uniform initialization fraction, and the number of adaptive scoring batches one at a time, reporting the unweighted mean accuracy across the four evaluation benchmarks (Appendix~\ref{app:sensitivity}).

\noindent\textbf{Relevance scorers.} We replace BLIP with CLIP \citep{clip} and BLIP-2 \citep{blip2}, keeping selector hyperparameters fixed. Our method outperforms Top-K, AKS, WFS-SB, and KFC in mean accuracy with all three scorers (Table~\ref{tab:scorer-ablation}), showing that its gains are not specific to BLIP.

\noindent\textbf{Downstream MLLMs.} We evaluate the same frame selections with LLaVA-OneVision-7B \citep{llava-onevision}, LLaVA-Video-7B \citep{llava-video}, and InternVL3.5-8B \citep{internvl35}. While the strongest baseline varies across models, our method achieves the highest observed mean accuracy among the compared selectors on each of the four MLLMs (Table~\ref{tab:backbone-transfer}).

\noindent\textbf{Model scales.} With $K=64$ input frames, accuracy increases with Qwen3-VL model size from 2B to 32B parameters. Our method remains ahead of AKS, WFS-SB, KFC, and FOCUS at every tested scale (Figure~\ref{fig:model-scale}), showing that its advantage persists with larger downstream MLLMs.

\begin{figure}[t]
    \centering
    \begin{minipage}[t]{0.48\linewidth}
        \centering
        \includegraphics[width=\linewidth]{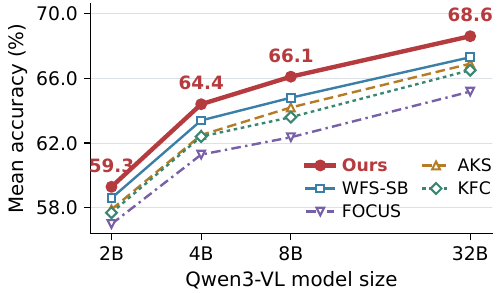}
        \caption{Generalization across model scales. Points show four-benchmark mean accuracy.}
        \label{fig:model-scale}
    \end{minipage}\hfill
    \begin{minipage}[t]{0.48\linewidth}
        \centering
        \includegraphics[width=\linewidth]{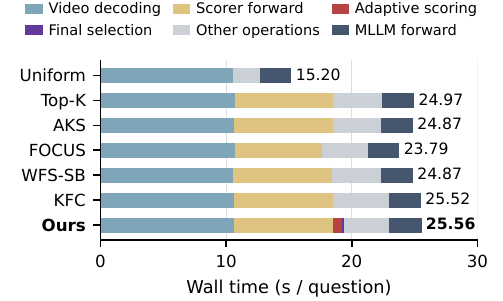}
        \caption{Wall-time decomposition of the entire pipeline for long-video question answering.}
        \label{fig:online-core-cost}
    \end{minipage}
\end{figure}

\subsection{Computational Cost}
\label{sec:computational-cost}

We evaluate runtime and peak GPU memory on 100 videos with three repeats per method, using Qwen3-VL-8B, BLIP, and $K=64$ on an A100-80GB GPU. Figure~\ref{fig:online-core-cost} reports the end-to-end breakdown at the default scoring budget. Video decoding and scorer inference account for approximately $72\%$ of FORTE's runtime, whereas adaptive scoring and final optimization together take only $0.88$ seconds ($3.4\%$). Its total latency is $25.56$ seconds per question, close to Top-K, AKS, WFS-SB, and KFC ($24.87$--$25.52$ seconds). Thus, the gains from adaptive scoring and optimization come with modest additional overhead relative to other score-based selectors. Our goal is not to reduce end-to-end cost relative to uniform sampling, which requires no relevance scoring, but to achieve higher accuracy under a fixed relevance-scoring budget. All scored methods reach the same selection-stage GPU memory peak of $2.52$\,GiB, compared with $23.46$\,GiB for MLLM answering. Appendix~\ref{app:online-cost} provides the measurement protocol and full numerical results.

\section{Conclusion}
\label{sec:conclusion}

We present FORTE, a training-free framework for long-video QA that decides which frames to score and which to retain under separate budgets. Both decisions balance relevance with temporal coverage, using efficient GP predictions during scoring and measured relevance during final selection. By exploiting the objective's logarithmic gap structure, our exact algorithm finds the globally optimal subset of the scored pool in $\mathcal{O}(KN)$ time. Across four benchmarks, FORTE achieves the highest observed mean accuracy among the compared methods under the tested budgets, consistently generalizing across the evaluated relevance scorers and downstream MLLMs.

\FloatBarrier

\bibliography{main}
\bibliographystyle{iclr2027_conference}

\clearpage
\appendix
This appendix complements the main manuscript with theoretical proofs, implementation details, and additional quantitative and qualitative analyses. It is organized as follows:
\begin{enumerate}
    \item Appendix~\ref{app:proofs} proves the temporal coverage properties and the optimality and complexity guarantees for the exact algorithm used in global keyframe optimization.
    \item Appendix~\ref{app:scoring} describes the efficient GP approximation and the batched implementation of adaptive relevance scoring.
    \item Appendix~\ref{app:implementation} specifies FORTE's implementation settings, the common evaluation protocol, and the baseline implementations.
    \item Appendix~\ref{app:additional-experiments} examines the contributions of FORTE's components and design choices, quantifies uncertainty in accuracy differences, and evaluates numerical stability, computational cost, and performance across video durations and scoring-budget regimes.
    \item Appendix~\ref{app:related-work} extends the discussion of long-video modeling and keyframe selection, clarifying how FORTE relates to existing approaches.
    \item Appendix~\ref{app:qualitative} compares scored candidates, selected frames, and model answers through qualitative examples of evidence acquisition, final retention, and a shared failure case.
\end{enumerate}

\section{Coverage Objective and Selection Guarantees}
\label{app:proofs}

The coverage objective in Section~\ref{sec:method} guides both adaptive relevance scoring and global keyframe optimization. We first prove its two key properties: it favors balanced temporal spacing at a fixed frame count, and the gain from adding a frame decreases as nearby frames accumulate. We then derive the exact algorithm and prove that it maximizes the final relevance--coverage objective in $\mathcal{O}(KN)$ time. Throughout, we use the main manuscript's interval $[0,T]$ and represent frame sets by their timestamps. The coverage scale $\tau>0$ remains fixed as the selected set changes.

\subsection{Temporal Gaps and Uniform-Spacing Optimality}
\label{app:proof-uniform-coverage}

Let $\mathcal{S}=\{s_1<\cdots<s_m\}\subseteq[0,T]$ contain the selected timestamps. Following Section~\ref{sec:coverage}, set $s_0=0$ and $s_{m+1}=T$. The temporal gaps are
\begin{equation}
    g_i(\mathcal{S})=s_{i+1}-s_i,\qquad i=0,\ldots,m.
    \label{eq:gap-definition}
\end{equation}
These gaps include the two boundary gaps, so they are nonnegative and sum to $T$. For brevity, write $f_\tau(g)=\log(1+g/\tau)$. The coverage objective is then
\begin{equation}
    C_\tau(\mathcal{S})=\sum_{i=0}^{m}f_\tau\!\left(g_i(\mathcal{S})\right).
    \label{eq:coverage-expanded}
\end{equation}
For a fixed number of selected frames, the total gap length is always $T$; only its distribution among the gaps changes. The concavity of $f_\tau$ makes balanced gaps preferable to a mixture of very short and very long gaps. For the empty set, there is one gap of length $T$, giving $C_\tau(\varnothing)=f_\tau(T)$.

\begin{proof}[Proof of Theorem~\ref{thm:uniform-coverage}]
The function $f_\tau$ is strictly concave for $g\geq0$, since $f_\tau''(g)=-(\tau+g)^{-2}<0$. Applying Jensen's inequality to the $m+1$ gaps gives
\begin{equation}
    \frac{C_\tau(\mathcal{S})}{m+1}
    =\frac{1}{m+1}\sum_{i=0}^{m}f_\tau\!\left(g_i(\mathcal{S})\right)
    \leq f_\tau\!\left(\frac{T}{m+1}\right).
\end{equation}
The bound depends only on $T$, $m$, and $\tau$, not on the selected timestamps. By strict concavity, equality holds if and only if all gaps equal $T/(m+1)$. The timestamps $s_i=iT/(m+1)$ for $i=1,\ldots,m$ attain this bound, proving Theorem~\ref{thm:uniform-coverage}.
\end{proof}

The theorem characterizes coverage alone when any timestamp can be selected. In practice, selection is restricted to the scored candidates and also accounts for relevance, so the optimal subset need not be uniformly spaced.

\subsection{Marginal Coverage and Diminishing Returns}
\label{app:proof-marginal-equivalence}

The marginal coverage gain measures the additional temporal coverage contributed by one frame. For a frame at timestamp $t\notin\mathcal{S}$, adding $t$ replaces one temporal gap by two smaller gaps; all other gaps remain unchanged. Let $L$ and $R$ be the nearest selected timestamps to the left and right of $t$, using $0$ or $T$ when the corresponding neighbor is absent. The marginal gain is therefore
\begin{equation}
    \Delta C_\tau(t\mid\mathcal{S})
    =f_\tau(t-L)+f_\tau(R-t)-f_\tau(R-L).
    \label{eq:marginal-coverage}
\end{equation}

\begin{proof}[Proof of Theorem~\ref{thm:marginal-equivalence}]
The original gap has length $R-L$. Inserting $t$ replaces its reward $f_\tau(R-L)$ by $f_\tau(t-L)+f_\tau(R-t)$, establishing Equation~\ref{eq:marginal-coverage} and its dependence only on the containing gap. Let $a=t-L\geq0$ and $b=R-t\geq0$ be the distances to the two neighbors. Writing the marginal gain as $h_\tau(a,b)$ gives
\begin{equation}
    h_\tau(a,b)
    =\log\!\left(\frac{(\tau+a)(\tau+b)}{\tau(\tau+a+b)}\right)
    =\log\!\left(1+\frac{ab}{\tau(\tau+a+b)}\right)\geq0.
\end{equation}
Thus, adding a frame cannot decrease coverage, establishing monotonicity.

To prove diminishing returns, we examine how this gain changes as the neighboring selected timestamps move closer to $t$. Its partial derivatives satisfy
\begin{equation}
    \frac{\partial h_\tau}{\partial a}
    =\frac{1}{\tau+a}-\frac{1}{\tau+a+b}\geq0,
    \qquad
    \frac{\partial h_\tau}{\partial b}
    =\frac{1}{\tau+b}-\frac{1}{\tau+a+b}\geq0.
\end{equation}
Hence, the gain is nondecreasing in each distance. Now take $\mathcal{A}\subseteq\mathcal{B}\subseteq\mathcal{U}$ and $t\in\mathcal{U}\setminus\mathcal{B}$. Because $\mathcal{B}$ contains every timestamp in $\mathcal{A}$, its nearest neighbors of $t$ can only be closer, or remain the same. Both $a$ and $b$ therefore weakly decrease when passing from $\mathcal{A}$ to $\mathcal{B}$, so $\Delta C_\tau(t\mid\mathcal{A})\geq\Delta C_\tau(t\mid\mathcal{B})$. This is precisely the diminishing-returns condition for submodularity on the finite timestamp set $\mathcal{U}$.
\end{proof}

\noindent\textbf{Relation to submodular maximization.}
Theorem~\ref{thm:marginal-equivalence} places our temporal coverage function within the class of
monotone submodular set functions. Since the relevance term is modular,
the final objective in Eq.~\ref{eq:final-objective} is also monotone submodular for
nonnegative relevance scores and $\beta_{\mathrm{sel}} \geq 0$.
Consequently, generic algorithms for cardinality-constrained
submodular maximization, such as marginal-gain greedy selection, could
be applied to this objective.

Our setting, however, has additional structure that is not used by a
generic submodular optimizer: candidate frames are totally ordered in
time, and the coverage term decomposes over consecutive temporal gaps.
For the logarithmic gap reward used here, this structure permits exact
optimization rather than approximate greedy maximization. Appendix~\ref{app:proof-dp-optimality}
exploits this property to derive the $O(KN)$ algorithm used by FORTE.

\subsection{Exact Algorithm}
\label{app:proof-dp-optimality}

We optimize $F$ over a fixed scored pool $\mathcal{O}=\{t_1<\cdots<t_N\}\subseteq[0,T]$, with $T>0$ and $1\leq K\leq N$. For brevity, write $\beta=\beta_{\mathrm{sel}}$ and $\tau=\tau_K=T/(K+1)$, keeping both fixed throughout the optimization. We first establish the recurrence, then show how the logarithmic gap reward reduces its evaluation to $\mathcal{O}(N)$ work per layer.

\noindent\textbf{Dynamic-programming recurrence.} Let $D[m,i]$ be the best partial objective for $m$ selected frames ending at $t_i$, excluding the final gap to $T$. Appending $t_i$ to a selection ending at $t_j$ adds only $r_i+\beta f_\tau(t_i-t_j)$, giving
\begin{align}
    D[1,i]&=r_i+\beta f_\tau(t_i), &&1\leq i\leq N,
    \label{eq:dp-base}\\
    D[m,i]&=r_i+\max_{m-1\leq j<i}\left[D[m-1,j]+\beta f_\tau(t_i-t_j)\right],
    &&\begin{gathered}2\leq m\leq K,\\m\leq i\leq N.\end{gathered}
    \label{eq:dp-recurrence}
\end{align}
After the last layer, we restore the right-boundary gap:
\begin{equation}
    F^\star=\max_{K\leq i\leq N}\left[D[K,i]+\beta f_\tau(T-t_i)\right].
    \label{eq:dp-tail}
\end{equation}
The recurrence is exact by induction. The initialization is the partial objective of the only one-frame selection ending at $t_i$. If the previous layer is optimal, every $m$-frame selection ending at $t_i$ consists of a feasible $(m-1)$-frame selection ending at some $t_j$, followed by $t_i$. Maximizing over these predecessors therefore gives the optimal partial objective. Equation~\ref{eq:dp-tail} accounts for the only omitted gap, and stored predecessor indices recover an optimal subset by backtracking.

\noindent\textbf{Transforming predecessor searches into line queries.} Assume $\beta>0$. Within a fixed layer, write $d_j=D[m-1,j]$ and define the transition contribution
\begin{equation}
    V_j(x)=d_j+\beta\log\!\left(1+\frac{x-t_j}{\tau}\right),
    \qquad
    L_j(x)=a_j(x+\tau-t_j),\quad a_j=\exp(d_j/\beta).
    \label{eq:envelope-lines}
\end{equation}
For eligible predecessors $j<i$ at $x=t_i$, every $L_j(x)$ is positive and
\begin{equation}
    \max_j V_j(x)=\beta\log\!\left(\frac{\max_j L_j(x)}{\tau}\right).
    \label{eq:envelope-equivalence}
\end{equation}
Thus, the same predecessor maximizes the transition contribution and the corresponding line value.

\noindent\textbf{Discarding dominated predecessors.} Suppose $p<j$ and $d_p\geq d_j$. For every future query $x\geq t_j$,
\begin{equation}
    V_p(x)-V_j(x)
    =d_p-d_j+\beta\log\!\left(\frac{\tau+x-t_p}{\tau+x-t_j}\right)>0.
    \label{eq:record-dominance}
\end{equation}
The earlier predecessor always contributes more, so $j$ can be discarded. We therefore insert only predecessors whose $d_j$ is a strict new maximum among all feasible predecessors processed so far. Their slopes $a_j$ increase strictly, while query timestamps increase with $i$. Importantly, the running maximum includes predecessors already removed from the queue.

\noindent\textbf{Maintaining the upper envelope.} The upper envelope is the pointwise maximum of the retained lines. We maintain it using a monotone upper-envelope construction, storing its potentially optimal lines in a double-ended queue ordered by increasing slope. For retained predecessors $p<q$ with $d_p<d_q$, their intersection is
\begin{equation}
    X(p,q)=t_p-\tau+\frac{t_q-t_p}{1-\exp(-(d_q-d_p)/\beta)}.
    \label{eq:envelope-intersection}
\end{equation}
When inserting a new line $j$, the middle line $q$ of the last three lines $p,q,j$ can be removed whenever $X(p,q)\geq X(q,j)$: it has no interval on which it is strictly best. At a query, if the second line is at least as good as the first, we remove the first. Because the second has larger slope, the first cannot become better at a later query. We insert the newly eligible predecessor $j=i-1$ immediately before querying $t_i$, ensuring that only $j<i$ participates. Algorithm~\ref{alg:fast-exact} gives the full procedure.

\begin{proof}[Proof of Theorem~\ref{thm:dp-optimality}]
For $\beta>0$, Equation~\ref{eq:envelope-equivalence} preserves the maximizing predecessor. Equation~\ref{eq:record-dominance} discards only predecessors dominated at every future query. Back removal preserves the upper envelope, and front removal discards only lines permanently overtaken as query timestamps increase. Consequently, the queue returns a maximizing eligible predecessor for every DP state. The recurrence and terminal boundary reward established above then recover a global maximizer of $F$.

Each predecessor enters the queue at most once and is removed at most once per layer. Insertion and querying therefore require $\mathcal{O}(N)$ total operations per layer. Initialization and terminal maximization take $\mathcal{O}(N)$ time, and backtracking takes $\mathcal{O}(K)$, giving $\mathcal{O}(KN)$ time overall. Two rolling DP rows and the queue use $\mathcal{O}(N)$ working memory; storing backtracking indices uses $\mathcal{O}(KN)$ additional memory. For $\beta=0$, selecting the $K$ largest relevance scores maximizes $F$ directly. Linear-time order-statistic selection followed by a scan of the ordered pool also satisfies the claimed bounds.
\end{proof}

\begin{algorithm}[t]
    \caption{Exact keyframe optimization with a monotone upper envelope}
    \label{alg:fast-exact}
    \small
    \begin{algorithmic}[1]
        \Require Sorted timestamps $t_1<\cdots<t_N$, scores $r_i$, $1\leq K\leq N$, $T>0$, $\beta\geq0$
        \Ensure Optimal indices $J_1<\cdots<J_K$ and objective value $F^\star$
        \If{$\beta=0$}
            \State \Return the $K$ highest-scoring indices in temporal order and their score sum
        \EndIf
        \State $\tau\gets T/(K+1)$; $\mathrm{prev}_i\gets r_i+\beta\log(1+t_i/\tau)$ for $i=1,\ldots,N$
        \State Allocate predecessor array $P[1{:}K,1{:}N]$
        \For{$m=2,\ldots,K$}
            \State $\mathrm{curr}\gets(-\infty,\ldots,-\infty)$; $H\gets$ empty deque; $d_{\max}\gets-\infty$
            \State Define $V_j(x)$ and $X(p,q)$ by Equations~\ref{eq:envelope-lines} and~\ref{eq:envelope-intersection}, with $d_j=\mathrm{prev}_j$
            \For{$i=m,\ldots,N$}
                \State $j\gets i-1$
                \If{$\mathrm{prev}_j>d_{\max}$}
                    \State $d_{\max}\gets\mathrm{prev}_j$
                    \While{$|H|\geq2$ and $X(H_{-2},H_{-1})\geq X(H_{-1},j)$}
                        \State Remove the last element of $H$
                    \EndWhile
                    \State Append $j$ to $H$
                \EndIf
                \While{$|H|\geq2$ and $V_{H_2}(t_i)\geq V_{H_1}(t_i)$}
                    \State Remove the first element of $H$
                \EndWhile
                \State $j^\star\gets H_1$; $\mathrm{curr}_i\gets r_i+V_{j^\star}(t_i)$; $P[m,i]\gets j^\star$
            \EndFor
            \State $\mathrm{prev}\gets\mathrm{curr}$
        \EndFor
        \State $J_K\gets\arg\max_{K\leq i\leq N}[\mathrm{prev}_i+\beta\log(1+(T-t_i)/\tau)]$
        \State $F^\star\gets\mathrm{prev}_{J_K}+\beta\log(1+(T-t_{J_K})/\tau)$
        \For{$m=K,K-1,\ldots,2$}
            \State $J_{m-1}\gets P[m,J_m]$
        \EndFor
        \State \Return $(J_1,\ldots,J_K),F^\star$
    \end{algorithmic}
    \medskip
    \footnotesize $H_1,H_2$ denote the first two queue elements; $H_{-2},H_{-1}$ denote the last two. Equal DP-value records are discarded, and ties at a query favor the later line.
\end{algorithm}

\noindent\textbf{Numerical evaluation and scope.} The lines in Equation~\ref{eq:envelope-lines} establish the equivalence but need not be formed explicitly. We evaluate $V_j(x)$ using $\operatorname{log1p}((x-t_j)/\tau)$ and the denominator in Equation~\ref{eq:envelope-intersection} using $-\operatorname{expm1}(-(d_q-d_p)/\beta)$. This avoids exponentiating large DP values and reduces cancellation for nearby values. The optimality and operation-count statements use real arithmetic, with elementary functions counted as constant-cost operations. Floating-point comparisons near ties can still be sensitive to rounding; avoiding large exponentials is not a guarantee of numerically exact decisions. For $K=1$, initialization and terminal maximization suffice. No pairwise gap-reward matrix is needed, and retaining only the optimal value reduces storage to $\mathcal{O}(N)$.



\FloatBarrier
\section{Details of Adaptive Relevance Scoring}
\label{app:scoring}

Starting from sparse uniform initialization, FORTE predicts relevance for unscored frames and selects additional frames using both predicted relevance and temporal coverage. It then evaluates these frames with the relevance scorer $\psi$ and uses their measured scores to update the GP. Repeating this process allows new observations to inform subsequent scoring decisions until the scoring budget $B$ is exhausted. This section explains how we make GP prediction efficient and organize the select--score--update process into batches.

\subsection{Gaussian-Process Relevance Predictor}
\label{app:gp-approximation}

The GP relevance predictor uses only the timestamps and measured relevance scores of frames already evaluated. Its predictions guide which frames to score next; global keyframe optimization instead uses measured relevance scores. Following Section~\ref{sec:acquisition}, $\mathcal{O}$ denotes the timestamps of the frames scored so far, and $\mathbf r$ contains their normalized relevance scores. The predicted relevance of an unscored frame at timestamp $t$ is $\mu(t)$ in Equation~\ref{eq:gp-mean}, with $\mu_{\mathrm{obs}}$ set to the mean of $\mathbf r$. For frames with no nearby scored frames, predictions approach this observed average rather than zero. Score normalization is described in Appendix~\ref{app:implementation}.

The main computational challenge is updating the GP as the observed set grows. A dense fit accounts for every pair of observed timestamps, even though the RBF kernel assigns small values to temporally distant pairs. We reduce this cost by retaining a band of entries around the kernel matrix's diagonal and evaluating predictions from a local neighborhood. All reported experiments involving GP prediction use the approximation described below.

\noindent\textbf{Fitting the GP with a banded kernel matrix.} Each frame retains its position in the original video's temporal order. At the start of a scoring round, we scan this ordered universe $\mathcal{U}$ and use a Boolean observation mask to collect the $n_{\mathrm{obs}}=|\mathcal{O}|$ observed timestamps and their scores. This produces the required chronological order without comparison sorting. In this order, entries near the kernel matrix's diagonal relate neighboring observations. We keep $[\mathbf K]_{ij}$ when $|i-j|\leq w$ and set the other entries to zero, where $w$ is the half-bandwidth, initialized to $\min\{128,n_{\mathrm{obs}}-1\}$ in our experiments. The band is defined by observation count rather than a fixed temporal duration. The resulting matrix, $\mathbf K^{(w)}$, defines the approximate fit:
\begin{equation}
    (\mathbf K^{(w)}+\epsilon\mathbf{I})\boldsymbol\alpha
    =\mathbf r-\mu_{\mathrm{obs}}\mathbf 1,
    \label{eq:banded-gp-solve}
\end{equation}
where $\mathbf r$ contains the normalized observed scores. The diagonal adjustment $\epsilon\mathbf{I}$ is the same numerical-stability term as in Equation~\ref{eq:gp-mean}. We obtain the coefficients $\boldsymbol\alpha$ using banded Cholesky factorization and triangular solves, without forming a matrix inverse. These coefficients are then reused to predict relevance for unscored frames.

\noindent\textbf{Numerical safeguards.} Zeroing kernel entries outside the band can destroy positive definiteness and cause Cholesky factorization to fail. We therefore allow two forms of recovery: increasing the diagonal adjustment and, if needed, widening the band. All computations used to construct and solve the GP system use double precision (FP64). For a fixed half-bandwidth $w$, we begin with $\epsilon=10^{-12}$ and increase it tenfold after each failed factorization, up to $10^{-3}$. If none of these attempts succeeds and a wider band is allowed, we set $w\leftarrow\min\{2w,w_{\max},n_{\mathrm{obs}}-1\}$, rebuild the matrix, and restart the adjustment schedule from $10^{-12}$. We use a common recovery cap $w_{\max}=1024$ across benchmarks and scoring budgets. We stop at the first successful factorization and use its fitted coefficients and half-bandwidth for prediction. If all attempts fail at the maximum permitted bandwidth, the implementation raises an error rather than returning predictions.

\noindent\textbf{Predicting relevance for unscored frames.} We traverse unscored frames in chronological order while maintaining a pointer into the ordered observed timestamps. As prediction timestamps increase, the pointer advances to identify each frame's insertion position and never moves backward. This locates all prediction neighborhoods in a single linear sweep. For a frame at timestamp $t$, we take at most $2w+1$ adjacent observation indices around its insertion position, denoted $\mathcal{J}_w(t)$. We approximate its predicted relevance $\mu(t)$ by restricting the posterior-mean sum to this neighborhood:
\begin{equation}
    \mu(t)\approx\mu_{\mathrm{obs}}+\sum_{i\in\mathcal{J}_w(t)}k(t,t_i)\alpha_i.
    \label{eq:local-gp-prediction}
\end{equation}
Thus, predicting relevance for another unscored frame requires only a short weighted sum, not another GP fit. This is the local approximation used to compute the relevance predictions in Section~\ref{sec:acquisition}.

\noindent\textbf{Computational complexity.}
We separate the cost of the GP predictor from the remaining computations used to construct each acquisition batch.

\noindent\textit{\textbf{GP preparation, fitting, and prediction.}}
Extracting observations in temporal order takes $\mathcal{O}(|\mathcal{U}|)$ time. Dense GP factorization would require $\mathcal{O}(n_{\mathrm{obs}}^3)$ time and $\mathcal{O}(n_{\mathrm{obs}}^2)$ memory. With half-bandwidth $w$, banded factorization instead requires $\mathcal{O}(n_{\mathrm{obs}}w^2)$ time and $\mathcal{O}(n_{\mathrm{obs}}w)$ memory; constructing the banded matrix and performing the triangular solves each take $\mathcal{O}(n_{\mathrm{obs}}w)$ time. For $n_{\mathrm{pred}}$ prediction frames, the chronological sweep locates all neighborhoods in $\mathcal{O}(n_{\mathrm{obs}}+n_{\mathrm{pred}})$ time, since each observed timestamp is passed at most once. Evaluating the local posterior-mean sums then takes $\mathcal{O}(n_{\mathrm{pred}}w)$ time. The complete GP preparation, fitting, and prediction cost per round is therefore
\begin{equation}
    \mathcal{O}\!\left(
        |\mathcal{U}|
        + n_{\mathrm{obs}}w^2
        + n_{\mathrm{pred}}w
    \right).
    \label{eq:gp-round-complexity}
\end{equation}
Since $n_{\mathrm{obs}},n_{\mathrm{pred}}\leq|\mathcal{U}|$, this is $\mathcal{O}(|\mathcal{U}|)$ for fixed $w$. Including the observation mask and prediction arrays, memory use is
$\mathcal{O}(|\mathcal{U}|+n_{\mathrm{obs}}w)$, also linear for fixed $w$.

Equation~\ref{eq:gp-round-complexity} describes one factorization attempt followed by prediction at the successful bandwidth. Retries add matrix-construction and factorization costs at the attempted bandwidths. With the bounded diagonal-adjustment schedule and fixed recovery cap $w_{\max}$, the retry-inclusive GP cost remains linear in $|\mathcal{U}|$ per round. Across $R$ scoring rounds, the total GP preparation, fitting, and prediction cost is therefore
\begin{equation}
    \mathcal{O}(R|\mathcal{U}|),
    \label{eq:gp-total-complexity}
\end{equation}
or $\mathcal{O}(|\mathcal{U}|)$ for a fixed round count such as the default $R=4$.

\noindent\textit{\textbf{Acquisition and heap maintenance.}}
We next account for the frame-by-frame acquisition decisions after the GP predictions have been computed. Let
$m_0=|\mathcal{O}_0|$ denote the number of uniformly scored initialization frames. Since $m_0\approx B/2$ and these frames are approximately uniformly distributed over the ordered frame universe, they partition $\mathcal{U}$ into temporal gaps containing at most
\begin{equation}
    G_0
    =
    \mathcal{O}\!\left(
        \frac{|\mathcal{U}|}{m_0}
    \right)
    =
    \mathcal{O}\!\left(
        \frac{|\mathcal{U}|}{B}
    \right)
    \label{eq:max-initial-gap}
\end{equation}
unscored candidates per gap. Subsequent acquisitions only split these initial gaps, so every gap encountered during adaptive scoring contains at most $G_0$ candidates.

At the start of each scoring round, the GP predictions $\mu_h(t)$ are fixed for that round. We compute the acquisition value
\[
    \mu_h(t)
    +
    \beta_{\mathrm{acq}}
    \Delta C_{\tau_B}(t\mid \mathcal{A})
\]
for each currently unscored candidate and construct a max-heap over these values. Computing the initial keys by a chronological sweep and heapifying them both require $\mathcal{O}(|\mathcal{U}|)$ time. Across $R$ rounds, this contributes $\mathcal{O}(R|\mathcal{U}|)$ time.

When a frame is selected, only the temporal gap containing that frame is split. Because the marginal coverage gain depends only on the two endpoints of the containing gap, acquisition values outside this gap are unchanged. At most $G_0$ remaining candidates are therefore affected by one selection. Their GP relevance terms remain fixed within the round, while their marginal coverage gains can only decrease as the selected set grows. Using a position-aware max-heap, each affected key can consequently be recomputed and decreased in $\mathcal{O}(\log|\mathcal{U}|)$ time. Thus, one adaptive selection requires
\begin{equation}
    \mathcal{O}\!\left(
        G_0\log|\mathcal{U}|
    \right)
\end{equation}
heap-maintenance time. Since the total number of adaptive selections is at most $B-m_0=\mathcal{O}(B)$, their total cost is
\begin{equation}
    \mathcal{O}\!\left(
        B G_0 \log|\mathcal{U}|
    \right).
    \label{eq:acquisition-heap-general}
\end{equation}
Together with per-round key construction and heapification, the non-GP acquisition cost is therefore
\begin{equation}
    \mathcal{O}\!\left(
        R|\mathcal{U}|
        +
        B G_0\log|\mathcal{U}|
    \right).
    \label{eq:acquisition-misc-complexity}
\end{equation}
The heap, acquisition-value arrays, and gap bookkeeping require $\mathcal{O}(|\mathcal{U}|)$ additional memory.

Under the scoring protocol used in our experiments, the scoring budget satisfies $B=\Theta(\rho T)$, while a video with fixed frame rate $v$ contains $|\mathcal{U}|=\Theta(vT)$ frames. Hence
\begin{equation}
    G_0
    =
    \mathcal{O}\!\left(
        \frac{|\mathcal{U}|}{B}
    \right)
    =
    \mathcal{O}\!\left(
        \frac{v}{\rho}
    \right).
\end{equation}
For fixed frame rate $v$ and scoring allowance $\rho$, $G_0=\mathcal{O}(1)$, and Equation~\ref{eq:acquisition-misc-complexity} simplifies to
\begin{equation}
    \mathcal{O}\!\left(
        R|\mathcal{U}|
        +
        B\log|\mathcal{U}|
    \right).
    \label{eq:acquisition-misc-fixed-density}
\end{equation}
For a fixed number of rounds, this is
\begin{equation}
    \mathcal{O}\!\left(
        |\mathcal{U}|
        +
        B\log|\mathcal{U}|
    \right).
\end{equation}

\noindent\textit{\textbf{Overall adaptive-scoring computation.}}
Combining the GP computation in Equation~\ref{eq:gp-total-complexity} with acquisition and heap maintenance in Equation~\ref{eq:acquisition-misc-complexity}, the complete adaptive-scoring computation, excluding relevance-scorer forward passes, is
\begin{equation}
    \mathcal{O}\!\left(
        R|\mathcal{U}|
        +
        B G_0\log|\mathcal{U}|
    \right)
\end{equation}
for fixed GP bandwidth and recovery cap. Under fixed frame rate, scoring density, and round count, this becomes
\begin{equation}
    \boxed{
    \mathcal{O}\!\left(
        |\mathcal{U}|
        +
        B\log|\mathcal{U}|
    \right)}.
    \label{eq:adaptive-scoring-total-complexity}
\end{equation}
The corresponding working memory is
\[
    \mathcal{O}\!\left(
        |\mathcal{U}|+n_{\mathrm{obs}}w
    \right),
\]
including the GP representation, prediction arrays, max-heap, and acquisition bookkeeping.

These bounds cover observation extraction, GP fitting and posterior-mean prediction, acquisition-value computation, and frame-selection data structures. They do not include video decoding, relevance-scorer inference, final keyframe optimization, or downstream MLLM inference. Appendix~\ref{app:online-cost} reports these pipeline costs separately.

Appendix~\ref{app:gp-validation} reports a replay-based recovery audit, retry-inclusive GP timings, and a separate comparison with dense GP prediction. These checks distinguish successful numerical recovery from agreement with the untruncated predictor.

\subsection{Batch Construction and Observation Updates}

Section~\ref{sec:acquisition} describes choosing and scoring one frame before updating the GP. Although this uses each new score immediately, it requires a GP refit after every evaluation and prevents scoring several selected frames in the same forward pass. Our implementation instead chooses a batch of frames, scores them in parallel, and updates the GP once their scores are available. We call one repetition of these steps a scoring round.

Let $\mathcal{O}_0$ contain the timestamps of the frames scored during uniform initialization. We divide the remaining $B-|\mathcal{O}_0|$ evaluations as evenly as possible among $R$ rounds. Round $h\in\{0,\ldots,R-1\}$ selects $b_h$ new frames, with the allowances summing to the remaining scoring budget. The default uses four rounds; each round proceeds as follows.

\noindent\textbf{Predict from the available scores.} At the start of round $h$, let $\mathcal{O}_h$ contain the timestamps of the frames scored so far and $\mathbf r_h$ their normalized relevance scores in timestamp order. We use the approximation above to compute $\mu_h(t)$, the predicted relevance of an unscored frame at timestamp $t$. The subscript $h$ identifies the current round's observations; the prediction has the same role as $\mu(t)$ in Section~\ref{sec:acquisition}. These predictions remain fixed until the batch has been scored.

\noindent\textbf{Choose frames while updating coverage.} We choose the batch one frame at a time. If coverage were also held fixed, several choices could concentrate around the same predicted relevance peak. We therefore update coverage after every choice, counting both previously scored frames and frames already chosen for the current batch. Let $\mathcal{A}_{h,j}$ contain the timestamps of all these frames after $j$ choices, starting from $\mathcal{A}_{h,0}=\mathcal{O}_h$. The $j$th chosen frame has timestamp $t_{h,j}$, given by
\begin{equation}
    \begin{aligned}
        t_{h,j}&=\arg\max_{t\in\mathcal{U}\setminus\mathcal{A}_{h,j-1}}
        \left[\mu_h(t)+\beta_{\mathrm{acq}}\Delta C_{\tau_B}(t\mid\mathcal{A}_{h,j-1})\right],\\
        \mathcal{A}_{h,j}&=\mathcal{A}_{h,j-1}\cup\{t_{h,j}\},
    \end{aligned}
    \label{eq:acquisition-batch}
\end{equation}
for $j=1,\ldots,b_h$, with coverage evaluated over $[0,T]$ and $\tau_B=T/(B+1)$. The relevance term uses the predictions made at the start of the round, while the coverage term reflects all choices made so far. Updating coverage requires only the timestamps of the chosen frames, not their measured relevance scores.

This frame-by-frame update is inexpensive: each choice splits one temporal gap, changing coverage gains only for candidates inside it. By Equation~\ref{eq:marginal-coverage}, evaluating a candidate's gain uses only its neighboring selected timestamps and takes $\mathcal{O}(1)$ arithmetic operations. Maintaining the ordered timestamps and finding the next best candidate are separate from this local gain calculation.

\noindent\textbf{Score the batch and update the observations.} After all $b_h$ frames have been chosen, we evaluate them with the relevance scorer using batched forward passes. These evaluations are independent, so all selected frames can be scored in parallel given sufficient computational resources. Once all scores are available, we add them to the observations and set $\mathcal{O}_{h+1}=\mathcal{A}_{h,b_h}$. If budget remains, these observations determine the GP predictions for the next round.

Within each round, GP predictions remain fixed, while temporal coverage is updated after each frame choice. Once the selected frames have been scored, their measured relevance scores are used to update the GP for the next round.

\section{Implementation and Evaluation Details}
\label{app:implementation}

\subsection{FORTE Implementation}

\noindent\textbf{Default settings.} Unless otherwise stated, we use the RBF kernel in Equation~\ref{eq:rbf-kernel} with length scale $\ell=10$ seconds. The coverage weights are $\beta_{\mathrm{acq}}=3.5$ for adaptive relevance scoring and $\beta_{\mathrm{sel}}=1$ for global keyframe optimization. We allocate approximately half of the scoring budget to uniform initialization and divide the remainder among four scoring rounds. We use the $\mathcal{O}(KN)$ exact algorithm for final frame selection. Both stages of FORTE are deterministic given fixed scorer outputs and hyperparameters.

\noindent\textbf{Hyperparameter choices and numerical safeguards.} We use heuristic defaults shared across benchmarks and scoring budgets, without development-set tuning. With half-budget uniform initialization, a typical interior gap is approximately $2\tau_B$, and scoring its midpoint yields a coverage gain of $\log(4/3)$. We set $\beta_{\mathrm{acq}}\approx1/\log(4/3)\approx3.5$ so that this reference exploration bonus is comparable to the unit relevance range. For final optimization, $\beta_{\mathrm{sel}}=1$ provides a simple unit-weight coverage regularizer alongside measured relevance. The GP length scale of $10$ seconds, initialization fraction of $0.5$, and four scoring rounds are simple fixed defaults. The sensitivity experiments in Appendix~\ref{app:sensitivity} support these choices and indicate that performance is robust to moderate parameter changes. We also use a common numerical-recovery rule: FP64 arithmetic, an initial half-bandwidth of $\min\{128,n_{\mathrm{obs}}-1\}$, the same diagonal-adjustment schedule, and a recovery cap of $w_{\max}=1024$ (Appendix~\ref{app:gp-approximation}). This cap limits expansion after failed Cholesky attempts; it does not prescribe the bandwidth used in every fit.

\noindent\textbf{Setting the scoring budget.} The scoring allowance $\rho$ determines how many frame--query evaluations are available per second of video, not which frames are scored. We obtain the integer budget $B$ by counting distinct frames on a uniform reference grid. For a video with $n>1$ original frames at frame rate $v$, the timestamp span is $T=(n-1)/v$. We compute the reference stride $d_\rho=\max(1,\lfloor v/\rho\rfloor)$ and construct $\max(1,\lfloor n/d_\rho\rfloor)$ reference points. Mapping these points to original frames and removing duplicates gives $B$.

Baselines that score uniformly sampled candidates use the frames on this reference grid. FORTE instead scores an initial grid at $\rho/2$ and allocates the remaining evaluations adaptively. Thus, budget-matched methods evaluate the same number of frames but need not evaluate the same frames. Integer rounding can make the initialization count $|\mathcal{O}_0|$ differ slightly from $B/2$. We divide the remaining $B-|\mathcal{O}_0|$ evaluations as evenly as possible among the scoring rounds, so their batch sizes differ by at most one frame.

\noindent\textbf{Normalizing relevance scores.} We min--max normalize the observed relevance scores to $[0,1]$ for each question, placing relevance on a common scale before combining it with coverage. If all observed scores are identical, their normalized values are set to zero. GP predictions are clipped to the range of the normalized observations. The final scores $r_i$ remain fixed throughout global keyframe optimization.

\noindent\textbf{Implementing the exact algorithm.} We maintain two rolling DP rows and a double-ended queue for the current layer, using $\mathcal{O}(N)$ working memory, and store $\mathcal{O}(KN)$ predecessor indices for backtracking. Transition values and line intersections are evaluated as described in Appendix~\ref{app:proof-dp-optimality}; no pairwise gap-reward matrix is materialized. Backtracking returns the selected frames in chronological order.

\subsection{Evaluation Protocol}

\noindent\textbf{Dataset splits and question counts.} We evaluate the LongVideoBench validation split ($1{,}337$ questions), the Video-MME test split ($2{,}700$ questions), the public LVBench evaluation set ($1{,}549$ questions), and the multiple-choice portion of the MLVU development set ($2{,}174$ questions). For reproducibility, the LVBench annotations come from the split named \texttt{train} in \texttt{lmms-lab/LVBench}; this is the public benchmark annotation set, not data used to train our method. The MLVU annotations come from the \texttt{mlvu} configuration and \texttt{test} split of \texttt{sy1998/MLVU\_dev}.

\noindent\textbf{Available input modalities.} We use video frames without audio or separately supplied subtitles on all four benchmarks. In particular, we do not use LongVideoBench's interleaved-subtitle input or Video-MME's subtitle-augmented input. Neither the scorer nor the downstream MLLM receives speech transcripts, generated captions, or separately extracted OCR text. Text embedded in the video remains visible in the selected frames. The downstream MLLM receives the question and its answer options, whereas relevance scoring uses the question alone.

\noindent\textbf{Scorer checkpoints and scoring heads.} Our default scorer is BLIP ViT-L, loaded through LAVIS as \texttt{blip\_image\_text\_matching} with model type \texttt{large}. This configuration uses the COCO-retrieval checkpoint \nolinkurl{model_large_retrieval_coco.pth} and $384\times384$ image preprocessing.\footnote{Official checkpoint configuration: \url{https://github.com/salesforce/LAVIS/blob/main/lavis/configs/models/blip_itm_large.yaml}.} We use the image--text matching (ITM) head and take the positive-class probability after the two-class softmax as the relevance score, rather than using the contrastive head.

For BLIP-2, we use \texttt{Salesforce/blip2-itm-vit-g} with its image--text retrieval model and associated processor. Its ITM head produces two-class logits for the Q-Former query tokens; these logits are averaged across the query tokens before applying softmax, and the positive-class probability is the frame's relevance score. For CLIP, we use \texttt{openai/clip-vit-large-patch14} and compute cosine similarity between the projected, $\ell_2$-normalized image and text embeddings, without logit-temperature scaling. Each scorer uses its corresponding evaluation image processor and tokenizer. All three receive only the original question text: we do not concatenate answer options or rewrite the question for scoring. The resulting scalar scores are normalized as described above before they enter FORTE.

\noindent\textbf{Downstream prompting and generation.} We use each downstream model's chat or conversation template with the selected frames and a multiple-choice question. The text contains the question followed by one answer option per line, using \texttt{A.}, \texttt{B.}, etc. for LongVideoBench, the supplied option labels for Video-MME, and \texttt{(A)}, \texttt{(B)}, etc. for LVBench and MLVU. It ends with ``Answer with the option's letter from the given choices directly.'' The Video-MME task additionally prepends ``Select the best answer to the following multiple-choice question based on the video. Respond with only the letter (A, B, C, or D) of the correct option.'' This inherited instruction mentions subtitles, but no subtitle text is supplied. We provide no in-context examples or chain-of-thought instruction. Answer generation uses greedy decoding without stochastic sampling, with a maximum of $16$ new tokens for Video-MME and $32$ for the other benchmarks.

\noindent\textbf{Answer extraction.} We compare the extracted option letter with the reference answer, without an LLM judge. The LongVideoBench and Video-MME task parsers strip standard answer prefixes and extract the first uppercase option letter (\texttt{A}--\texttt{E} and \texttt{A}--\texttt{D}, respectively). LVBench and MLVU use the evaluation harness's multiple-choice label extractor, which recognizes option letters in standard answer formats. An unparseable answer is counted as incorrect.

\noindent\textbf{Original timestamps and nonuniform frame inputs.} All selected frames are supplied in chronological order, retaining their original video-frame indices. For original frame index $i$ and source video frame rate $v$, its timestamp is $i/v$ seconds. In Qwen3-VL, the selected frames are passed as a video tensor together with the original frame indices, source frame rate, and total number of original frames. The processor uses this metadata to represent their temporal positions; further frame sampling is disabled, so a nonuniform selection is not replaced by a uniformly sampled sequence. In the downstream-model comparisons, InternVL3.5 receives a textual prefix specifying the video duration and the selected frames' original timestamps in seconds. LLaVA-Video and LLaVA-OneVision receive the chronologically ordered frames without an additional timestamp instruction. These input conventions are held fixed across selectors for each downstream model.

\noindent\textbf{Accuracy aggregation.} We report the percentage of correctly answered questions for each benchmark, weighting all evaluated questions equally. The four-benchmark mean, \emph{Avg.}, is the unweighted mean of these accuracies.

\noindent\textbf{Completing short selections.}\label{app:completion-policy} We adopt a common \emph{preserve-and-supplement} rule for all methods. If a selector returns fewer than $K$ frames, we retain its selected frames and repeatedly add the original video frame farthest in time from the current set until $K$ frames are obtained, breaking ties in favor of earlier timestamps. The added frames require no relevance evaluations and therefore do not consume additional scoring budget.

\subsection{Baseline Implementations}
\label{app:baseline-implementations}

We compare baseline frame-selection rules within a common evaluation pipeline, using Qwen3-VL-8B-Instruct and BLIP unless otherwise stated. In the scorer ablation, we change the relevance scorer while keeping each selection rule and its parameters fixed. In the downstream-model ablation, we evaluate the same selected frames with different MLLMs. All selected frames are supplied in chronological order. The descriptions below identify the official implementations and specify our adaptations to this common protocol.

All score-based baselines except FOCUS score a uniformly sampled candidate pool before selecting the final input frames. We refer to these as \emph{fixed-grid baselines} and match their scoring allowance $\rho$ and final-frame budget $K$ to FORTE. FOCUS follows its official scoring procedure, detailed below. Appendix~\ref{app:completion-policy} specifies the common rule for completing short selections.

\noindent\textbf{Uniform and Top-K.} \textsc{Uniform} samples $K$ evenly spaced frames from the original video without relevance scoring. \textsc{Top-K} selects the highest-scoring frames from the observed candidate pool.

\noindent\textbf{AKS.}\footnote{\url{https://github.com/ncTimTang/AKS}.}~\citep{aks} We follow the official AKS implementation and its default normalization and threshold settings, setting the maximum recursion depth to $\log_2 K$ for the tested frame budgets.

\noindent\textbf{BOLT.}\footnote{\url{https://github.com/sming256/BOLT}; \href{https://github.com/sming256/BOLT/blob/main/LICENSE}{MIT license}.}~\citep{bolt} We follow BOLT's official inverse-transform sampler and default hyperparameters.

\noindent\textbf{Q-Frame.}\footnote{\url{https://github.com/xiaomi-research/q-frame}; \href{https://github.com/xiaomi-research/q-frame/blob/master/LICENSE}{Apache-2.0 license}.}~\citep{q-frame} We follow Q-Frame's official implementation and default selection temperature. We keep downstream image preprocessing consistent across methods.



\noindent\textbf{AdaQ.}\footnote{\url{https://github.com/Zkayovo-xmu/AdaQ}.}~\citep{adaq} We follow AdaQ's official implementation, retaining its default variance-scaling coefficient, top-$p$ threshold, and minimum-support parameter. 


\noindent\textbf{WFS-SB.}\footnote{\url{https://github.com/MAC-AutoML/WFS-SB}.}~\citep{wfssb} We follow the official WFS-SB implementation and default selection hyperparameters, using BLIP relevance scores and visual features in our common evaluation pipeline.


\noindent\textbf{KFC from Nar-KFC.}\footnote{\url{https://github.com/bofang98/Nar-KFC}; \href{https://github.com/bofang98/Nar-KFC/blob/main/LICENSE}{Apache-2.0 license}.}~\citep{narkfc} We follow the official greedy KFC implementation, retaining its diversity and low-rank settings. We fix the candidate-reduction target and the local-refinement radius across benchmarks.

\noindent\textbf{FOCUS.}\footnote{\url{https://github.com/NUS-HPC-AI-Lab/FOCUS}; \href{https://github.com/NUS-HPC-AI-Lab/FOCUS/blob/main/LICENSE}{Apache-2.0 license}.}~\citep{focus} We follow the official FOCUS implementation and default selection hyperparameters, using BLIP as the scorer. Its scoring schedule determines the number of evaluations rather than enforcing a prescribed $\rho$, yielding an effective allowance of approximately $0.44$ frame--query evaluations per second of video in the full-benchmark comparison.

\noindent\textbf{Randomness in stochastic selectors.} Q-Frame uses Gumbel-Max sampling, and AdaQ also samples frames from a relevance-dependent distribution. To account for this randomness, we evaluate each method with five different random seeds, keeping the candidate pools, relevance scores, hyperparameters, and downstream inference settings fixed within each experimental condition. For each seed, we evaluate the selected frames and compute accuracy separately on each benchmark; the reported accuracy in Table~\ref{tab:main-results} is the arithmetic mean across the five runs. Avg.\ is then the unweighted mean of the four seed-averaged benchmark accuracies. Downstream answer generation uses greedy decoding, so the seeds vary frame selection rather than token sampling.

\section{Additional Experimental Results}
\label{app:additional-experiments}

Unless otherwise stated, experiments use Qwen3-VL-8B-Instruct, BLIP, $\rho=0.5$, $K=64$, and the exact algorithm for global keyframe optimization, following Appendix~\ref{app:implementation}. Accuracy is question-weighted within each benchmark; Avg.\ weights the four benchmarks equally.

\subsection{Matched-Budget Comparison with FOCUS}
\label{app:focus-matched-budget}

Table~\ref{tab:main-results} places FOCUS, whose official procedure yields an effective scoring allowance of approximately $0.44$, alongside methods evaluated at $\rho=0.5$. To separate the effect of frame selection from this budget difference, Table~\ref{tab:focus-matched-budget} compares FOCUS and FORTE with identical scoring counts for each question. FOCUS retains its official procedure, and FORTE receives the same number of relevance evaluations as the number of distinct frames scored by FOCUS for each video--question pair. Only the count is shared; each method chooses its own frames. This matches the integer budgets directly, including cases where FOCUS's per-video scoring rate differs from its aggregate rate of approximately $0.44$.

Both methods use BLIP, Qwen3-VL-8B-Instruct, $K=64$, and the same question sets, prompting, decoding, and preserve-and-supplement rule. FORTE retains its default hyperparameters, initialization fraction, four scoring rounds, and exact final optimizer; only its total scoring budget changes.

\begin{table}[htbp]
    \centering
    \caption{Matched-budget comparison at an effective scoring allowance of approximately $\rho=0.44$, using BLIP, Qwen3-VL-8B-Instruct, and $K=64$. Both methods score the same number of distinct frames for each question. Values are QA accuracy (\%); Avg.\ is the unweighted four-benchmark mean.}
    \label{tab:focus-matched-budget}
    \small
    \setlength{\tabcolsep}{3pt}
    \begin{tabular*}{\linewidth}{@{\extracolsep{\fill}}lcrrrrr@{}}
        \toprule
        Method & Effective $\rho$ & LongVB & Video-MME & LVBench & MLVU & Avg. \\
        \midrule
        FOCUS & $\sim\!0.44$ & 64.1 & 65.3 & 48.7 & 71.3 & 62.4 \\
        FORTE (Ours) & $\sim\!0.44$ & 67.8 & 69.7 & 50.5 & 75.2 & 65.8 \\
        \bottomrule
    \end{tabular*}
\end{table}

\noindent\textbf{Results.} FORTE achieves higher observed accuracy than FOCUS on all four benchmarks under matched scoring budgets, with a mean of $65.8\%$ compared with $62.4\%$. The accuracy advantage observed in Table~\ref{tab:main-results} therefore persists when both methods receive the same number of relevance evaluations, rather than arising solely from FORTE's larger scoring allowance in that comparison.

\subsection{Contributions of the Two Stages}
\label{app:component-details}

Table~\ref{tab:component-details} breaks down the main-text ablation in Table~\ref{tab:component-ablation} by benchmark.

\noindent\textbf{Stage 1: Adaptive relevance scoring.}
With Stage 2 fixed to global keyframe optimization, combining GP-predicted relevance with temporal coverage improves mean accuracy from $65.2\%$ with uniform scoring to $66.1\%$, and also outperforms either single-term variant. Coverage-only scoring matches uniform scoring at $65.2\%$, whereas relevance-only scoring achieves $64.2\%$, indicating that the benefit comes from jointly exploiting predicted relevance and coverage rather than either term alone. When Stage 2 is instead fixed to Top-K, adaptive scoring yields a smaller improvement over uniform scoring, from $64.6\%$ to $64.9\%$. Thus, adaptive scoring improves the quality of the candidate pool, but its downstream benefit depends on how the final frames are selected; a relevance-only Top-K selector cannot fully exploit candidates that are useful for improving temporal coverage.

\noindent\textbf{Stage 2: Global keyframe optimization.}
On identical scored pools, replacing Top-K with global keyframe optimization improves mean accuracy for both scoring policies: from $64.6\%$ to $65.2\%$ on the uniformly scored pool and from $64.9\%$ to $66.1\%$ on the adaptively scored pool. The larger gain on the adaptive pool indicates that the two stages are complementary: adaptive scoring makes more informative and temporally distributed candidates available, while global optimization determines which of these candidates should be retained by jointly considering measured relevance and coverage.

\begin{table}[!ht]
    \centering
    \caption{Contributions of the scoring policy and final selector under matched scoring budgets and completion rules. ``Global opt.'' denotes global keyframe optimization of the joint relevance--coverage objective. Values are accuracy (\%); bold marks the best result in each column.}
    \label{tab:component-details}
    \small
    \setlength{\tabcolsep}{2pt}
    \newcommand{\appendixhead}[1]{\makebox[0pt][c]{#1}}
    \begin{tabular*}{\linewidth}{@{\extracolsep{\fill}}llccccc@{\hspace{2pt}}}
        \toprule
        Stage 1 & Stage 2 & \appendixhead{LongVB} & \appendixhead{Video-MME} & \appendixhead{LVBench} & \appendixhead{MLVU} & \appendixhead{Avg.} \\
        \midrule
        Uniform & Top-K & 65.7 & 68.1 & 50.6 & 73.8 & 64.6 \\
        Uniform & Global opt. & 66.6 & 69.2 & 50.4 & 74.5 & 65.2 \\
        Coverage only & Global opt. & 65.9 & 69.6 & 50.4 & 74.9 & 65.2 \\
        Relevance only & Global opt. & 66.4 & 67.9 & 49.6 & 73.0 & 64.2 \\
        Relevance + coverage & Top-K & 66.7 & 67.5 & \textbf{51.0} & 74.4 & 64.9 \\
        Relevance + coverage & Global opt. & 68.1 & \textbf{69.9} & 50.7 & \textbf{75.7} & \textbf{66.1} \\
        \bottomrule
    \end{tabular*}
\end{table}

\subsection{Effect of the Final Optimization Algorithm}
\label{app:solver-comparison}

Unlike the Top-K comparison, Table~\ref{tab:solver-comparison} isolates the optimizer by holding the objective in Equation~\ref{eq:final-objective} fixed. Exact and marginal-gain greedy optimization receive identical pools and measured scores under both uniform and adaptive scoring, with $\beta_{\mathrm{sel}}=1$ and the same completion rule.

The greedy algorithm, labeled ``Approx.'', starts from the empty set and repeatedly adds the frame with the largest gain $r_i+\beta_{\mathrm{sel}}\Delta C_{\tau_K}(t_i\mid\mathcal{S})$ until $K$ frames have been selected, keeping $\tau_K=T/(K+1)$ fixed.

\begin{table}[!ht]
    \centering
    \caption{Exact versus approximate optimization of the same relevance--coverage objective at $\rho=0.5$ and $K=64$. Approx. denotes marginal-gain greedy maximization. Accuracy is reported in percent; Avg. is the unweighted four-benchmark mean. $\Delta F=F(\mathcal{S}_{\mathrm{exact}})-F(\mathcal{S}_{\mathrm{solver}})$ measures the objective shortfall relative to the exact optimum, in objective units. Objective gaps and final-selection times are evaluated on pools with $N\geq K$, averaged within each benchmark, then equally across benchmarks. Timing excludes relevance scoring and downstream inference.}
    \label{tab:solver-comparison}
    \small
    \setlength{\tabcolsep}{2pt}
    \begin{tabular*}{\linewidth}{@{\extracolsep{\fill}}llrrrrrrr@{}}
        \toprule
        \multirow{2}{*}{Scoring} & \multirow{2}{*}{Solver} & \multicolumn{5}{c}{QA accuracy (\%)} & \multirow{2}{*}{\shortstack{Mean\\$\Delta F$}} & \multirow{2}{*}{\shortstack{Time\\(ms)}} \\
        \cmidrule(lr){3-7}
        & & LongVB & Video-MME & LVBench & MLVU & Avg. & & \\
        \midrule
        Uniform & Exact & 66.6 & 69.2 & 50.4 & 74.5 & 65.2 & 0.0 & 45 \\
        Uniform & Approx. & 66.1 & 68.9 & 50.5 & 74.1 & 64.9 & 0.4 & 42 \\
        \midrule
        Adaptive & Exact & 68.1 & 69.9 & 50.7 & 75.7 & 66.1 & 0.0 & 50 \\
        Adaptive & Approx. & 67.3 & 69.4 & 50.8 & 75.3 & 65.7 & 0.7 & 47 \\
        \bottomrule
    \end{tabular*}
\end{table}

\noindent\textbf{Results.} Exact optimization improves mean accuracy by $0.3$ points on uniform pools and $0.4$ points on adaptive pools, eliminating mean objective gaps of $0.4$ and $0.7$, respectively, at an additional $3$\,ms per selection. Accuracy improves on LongVideoBench, Video-MME, and MLVU under both policies; Approx. is slightly higher on LVBench. The objective guarantee therefore accompanies a mean QA improvement, not a guarantee of higher accuracy on every benchmark.

\subsection{Paired Uncertainty Analysis}
\label{app:paired-uncertainty}

FORTE is deterministic given fixed scorer outputs and hyperparameters, and answer generation uses greedy decoding, so we do not repeat FORTE inference with different random seeds. Here, we quantify video-sampling uncertainty through a paired video-cluster bootstrap. This is distinct from averaging Q-Frame and AdaQ over five selector seeds (Appendix~\ref{app:baseline-implementations}); those two methods are not included in the paired comparisons below.

We compare default FORTE with Uniform, AKS, FOCUS, and WFS-SB. All share the default model and frame budget; FOCUS retains its effective scoring allowance of approximately $0.44$, and Uniform requires no relevance scoring.

\noindent\textbf{Measuring paired differences.} Let $c^{\mathrm{F}}_{di},c^{\mathrm{b}}_{di}\in\{0,1\}$ indicate whether FORTE and a baseline correctly answer question $i$ in benchmark $d$. We compute their accuracy difference in percentage points as
\begin{equation}
    \widehat{\Delta}_d=\frac{100}{n_d}\sum_{i=1}^{n_d}\left(c^{\mathrm{F}}_{di}-c^{\mathrm{b}}_{di}\right),
    \qquad
    \widehat{\Delta}_{\mathrm{avg}}=\frac{1}{4}\sum_{d=1}^{4}\widehat{\Delta}_d,
    \label{eq:paired-accuracy-difference}
\end{equation}
where $n_d$ is the number of evaluated questions in benchmark $d$. Pairing keeps each question's outcomes together rather than treating the two methods' accuracies as independent measurements.

\noindent\textbf{Video-cluster bootstrap.} Questions about the same video may have correlated outcomes, so we resample whole videos rather than individual questions. Within each benchmark, a replicate samples the original number of videos with replacement and retains all questions from each sampled video, including repeated occurrences. The same sampled videos are used for all five methods. We recompute question-weighted accuracy differences within each benchmark, then average the four differences with equal weight. Resampling is independent across benchmarks.

We use $20{,}000$ replicates with random seed $42$. The 2.5th and 97.5th percentiles define each 95\% confidence interval. These intervals describe video-sampling uncertainty conditional on the evaluated models and settings, not variation across inference runs.

\noindent\textbf{Results.} At the default setting, the 95\% confidence intervals for the four-benchmark mean differences against Uniform, AKS, FOCUS, and WFS-SB all lie above zero (Table~\ref{tab:paired-uncertainty}), supporting a mean accuracy advantage over each of these baselines. FORTE also has higher observed accuracy in every benchmark-level comparison, with intervals excluding zero against Uniform on all four benchmarks.

\begin{table}[htbp]
    \centering
    \caption{Paired comparisons of FORTE with four baselines. Each baseline has two columns: the accuracy difference $\Delta$ (FORTE minus baseline, in percentage points) and its 95\% video-cluster bootstrap confidence interval. Positive differences favor FORTE. Avg.\ is the unweighted four-benchmark mean. Its 95\% confidence interval is obtained by averaging the four benchmark differences within each bootstrap replicate, then taking the 2.5th and 97.5th percentiles of these averages.}
    \label{tab:paired-uncertainty}
    \small
    \setlength{\tabcolsep}{2pt}
    \renewcommand{\arraystretch}{1.1}
    \begin{tabular*}{\linewidth}{@{\extracolsep{\fill}}l*{4}{cc}@{}}
        \toprule
        \multirow{2}{*}{Benchmark} & \multicolumn{2}{c}{Uniform} & \multicolumn{2}{c}{AKS} & \multicolumn{2}{c}{FOCUS} & \multicolumn{2}{c}{WFS-SB} \\
        \cmidrule(lr){2-3}\cmidrule(lr){4-5}\cmidrule(lr){6-7}\cmidrule(lr){8-9}
        & $\Delta$ & 95\% CI & $\Delta$ & 95\% CI & $\Delta$ & 95\% CI & $\Delta$ & 95\% CI \\
        \midrule
        LongVideoBench & $+5.7$ & $[3.7,\,7.7]$ & $+3.0$ & $[1.2,\,4.8]$ & $+4.0$ & $[1.9,\,6.0]$ & $+2.6$ & $[0.7,\,4.5]$ \\
        Video-MME & $+2.2$ & $[0.9,\,3.6]$ & $+0.9$ & $[-0.3,\,2.2]$ & $+4.6$ & $[3.3,\,6.0]$ & $+1.5$ & $[0.3,\,2.8]$ \\
        LVBench & $+9.7$ & $[7.0,\,12.3]$ & $+3.0$ & $[0.8,\,5.3]$ & $+2.0$ & $[-0.4,\,4.4]$ & $+0.1$ & $[-1.8,\,2.0]$ \\
        MLVU & $+5.4$ & $[3.7,\,7.2]$ & $+1.2$ & $[-0.3,\,2.7]$ & $+4.4$ & $[2.8,\,5.9]$ & $+1.0$ & $[-0.4,\,2.4]$ \\
        \midrule
        Avg. & $+5.8$ & $[4.8,\,6.8]$ & $+2.0$ & $[1.1,\,2.9]$ & $+3.8$ & $[2.9,\,4.8]$ & $+1.3$ & $[0.5,\,2.2]$ \\
        \bottomrule
    \end{tabular*}
\end{table}

\FloatBarrier

\subsection{Parameter Sensitivity}
\label{app:sensitivity}

Figure~\ref{fig:parameter-sensitivity} varies one parameter at a time around the heuristic defaults in Appendix~\ref{app:implementation}, reporting four-benchmark mean accuracy. These sweeps assess sensitivity rather than select defaults. The initialization and round-count sweeps keep the total scoring budget fixed.

\begin{figure}[!ht]
    \centering
    \includegraphics[width=\linewidth]{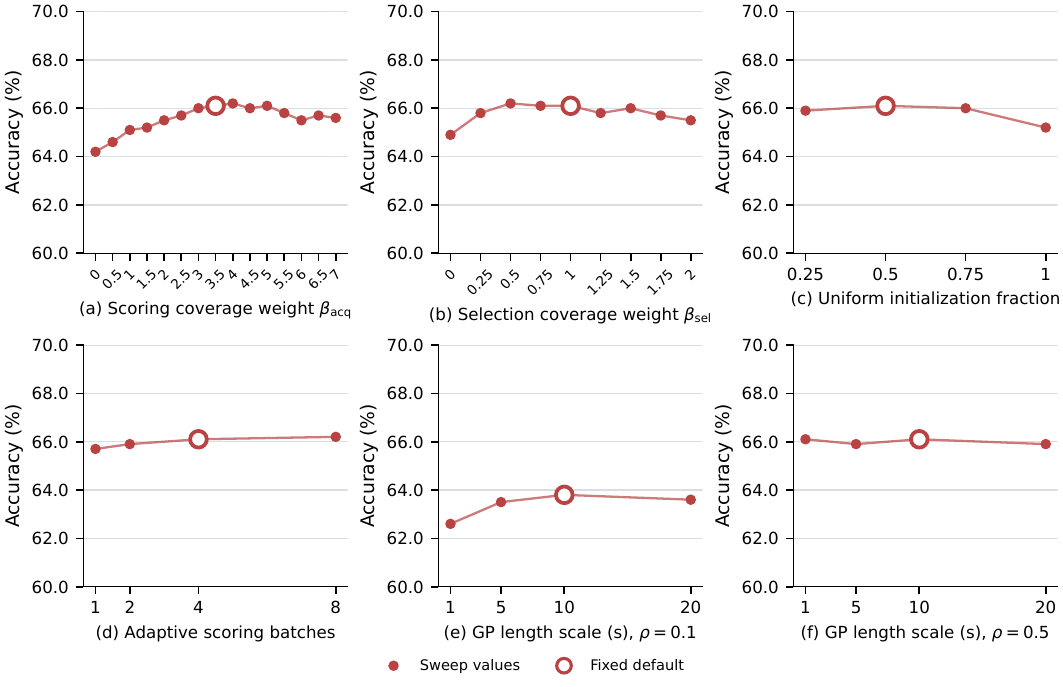}
    \caption{Parameter sensitivity measured by the unweighted four-benchmark mean accuracy (\%). We vary one parameter at a time around the heuristic defaults, marked by hollow circles. Panels (a)--(d) use $\rho=0.5$; panels (e)--(f) use the indicated budgets. All panels use $K=64$.}
    \label{fig:parameter-sensitivity}
\end{figure}

\noindent\textbf{Scoring coverage weight.} In Figure~\ref{fig:parameter-sensitivity}(a), every positive weight improves on relevance-only scoring. Accuracy rises up to $\beta_{\mathrm{acq}}=3$, remains near its peak from $3$ to $5$, and declines slightly at larger weights. The default of $3.5$ lies within this broad high-performing region.

\noindent\textbf{Selection coverage weight.} In Figure~\ref{fig:parameter-sensitivity}(b), all positive weights outperform relevance-only optimization. Accuracy peaks at $\beta_{\mathrm{sel}}=0.5$ but remains similar from $0.25$ to $1.5$, including the default of $1$, before declining at the two largest weights. We retain the unit-weight default rather than the sweep's best-performing value, so the main results reflect a shared heuristic setting rather than a weight selected to maximize evaluation accuracy.

\noindent\textbf{Initialization fraction.} Accuracy varies by only $0.2$ points across fractions from $0.25$ to $0.75$, with $0.5$ giving the highest result (Figure~\ref{fig:parameter-sensitivity}(c)). All three outperform spending the entire budget on uniform scoring: reserving evaluations for adaptive scoring matters more than the precise split within this range.

\noindent\textbf{Number of scoring rounds.} More frequent GP updates improve accuracy at a fixed scoring budget (Figure~\ref{fig:parameter-sensitivity}(d)), but doubling the rounds from four to eight adds only $0.1$ points. Four rounds retain most of the observed benefit with fewer refits.

\noindent\textbf{GP length scale.} At $\rho=0.1$, accuracy increases from $\ell=1$ to $10$ seconds and decreases slightly at $20$ seconds (Figure~\ref{fig:parameter-sensitivity}(e)--(f)). At $\rho=0.5$, all scales lie within $0.2$ points, with $1$ and $10$ seconds tied for the highest accuracy. A broader neighborhood thus appears more useful under sparse observations; the shared $10$-second default is highest or joint-highest at both budgets.

\subsection{Numerical Validation of the GP Predictor}
\label{app:gp-validation}

We assess numerical recovery, its cost, and agreement with dense GP prediction separately: successful factorization does not by itself establish approximation accuracy.

\noindent\textbf{Numerical recovery.} We replay adaptive scoring on all four benchmarks at $\rho\in\{0.1,0.5,1.0\}$ using the original numerical settings, revealing saved scores only when the policy selects their frames. All $23{,}280$ question--budget pairs exactly reproduce their saved scored pools.

All $91{,}247$ fits succeed (Table~\ref{tab:gp-stability}); $1.95\%$ require an increased diagonal adjustment and $0.55\%$ a wider band. No retries occur at $\rho=0.1$, and they are more frequent at the largest budget. The successful half-bandwidth is at most $128$ for $99.45\%$ of fits; the diagonal adjustment remains $10^{-12}$ for $98.48\%$ and never exceeds $10^{-3}$.

\begin{table}[!ht]
    \centering
    \caption{GP recovery in the scoring replay. Counts and percentages indicate fits requiring any increase in the diagonal adjustment or initial half-bandwidth. All fits succeed without terminal errors. The common recovery cap is $w_{\max}=1024$; the largest successful half-bandwidth $w$ is reported separately.}
    \label{tab:gp-stability}
    \small
    \setlength{\tabcolsep}{2pt}
    \begin{tabular*}{\linewidth}{@{\extracolsep{\fill}}lcrrrrrrr@{}}
        \toprule
        \multirow{2}{*}{Benchmark} & \multirow{2}{*}{$\rho$} & \multirow{2}{*}{Fits} & \multicolumn{2}{c}{Adjustment increase} & \multicolumn{2}{c}{Bandwidth expansion} & \multirow{2}{*}{$w_{\max}$} & \multirow{2}{*}{\shortstack{Largest\\successful $w$}} \\
        \cmidrule(lr){4-5}\cmidrule(lr){6-7}
        & & & Count & Rate (\%) & Count & Rate (\%) & & \\
        \midrule
        LongVideoBench & 0.1 & 4,195 & 0 & 0.00 & 0 & 0.00 & 1024 & 128 \\
        Video-MME & 0.1 & 10,302 & 0 & 0.00 & 0 & 0.00 & 1024 & 128 \\
        LVBench & 0.1 & 6,196 & 0 & 0.00 & 0 & 0.00 & 1024 & 128 \\
        MLVU & 0.1 & 8,696 & 0 & 0.00 & 0 & 0.00 & 1024 & 128 \\
        \midrule
        LongVideoBench & 0.5 & 5,132 & 38 & 0.74 & 9 & 0.18 & 1024 & 512 \\
        Video-MME & 0.5 & 10,794 & 19 & 0.18 & 4 & 0.04 & 1024 & 256 \\
        LVBench & 0.5 & 6,196 & 49 & 0.79 & 13 & 0.21 & 1024 & 512 \\
        MLVU & 0.5 & 8,696 & 17 & 0.20 & 7 & 0.08 & 1024 & 512 \\
        \midrule
        LongVideoBench & 1.0 & 5,348 & 336 & 6.28 & 117 & 2.19 & 1024 & 1024 \\
        Video-MME & 1.0 & 10,800 & 284 & 2.63 & 55 & 0.51 & 1024 & 512 \\
        LVBench & 1.0 & 6,196 & 629 & 10.15 & 151 & 2.44 & 1024 & 512 \\
        MLVU & 1.0 & 8,696 & 406 & 4.67 & 145 & 1.67 & 1024 & 1024 \\
        \bottomrule
    \end{tabular*}
\end{table}

\noindent\textbf{Cost of numerical recovery.} We measure GP fitting and prediction time on a duration-stratified subset of $100$ videos, with $25$ per benchmark and one question per video. Across the three budgets and all scoring rounds, this gives $1{,}148$ observed sets on which to fit the GP. We time each set three times on the same CPU host with one math-library thread, then average the repeats before computing the summary statistics.

Table~\ref{tab:gp-retry-cost} includes every factorization attempt and its matrix construction. Failed attempts account for $0.097\%$ of GP time; prediction for unscored frames dominates. End-to-end costs are reported in Appendix~\ref{app:online-cost}.

\begin{table}[!ht]
    \centering
    \caption{GP time per observed set on the 100-video subset. Fitting includes all retries. The last row reports the portion spent on unsuccessful attempts, already included above; its 95th percentile is zero because fewer than $5\%$ of the observed sets require retries.}
    \label{tab:gp-retry-cost}
    \small
    \setlength{\tabcolsep}{2pt}
    \begin{tabular*}{\linewidth}{@{\extracolsep{\fill}}lrr@{}}
        \toprule
        Component & Mean (ms) & 95th percentile (ms) \\
        \midrule
        Matrix construction & 1.229 & 2.708 \\
        Factorization & 0.653 & 2.373 \\
        Triangular solve & 0.041 & 0.152 \\
        Total fitting wall time & 1.999 & 5.380 \\
        Relevance prediction & 223.272 & 948.733 \\
        \addlinespace[2pt]
        Total GP wall time & 225.271 & 959.180 \\
        \midrule
        Unsuccessful attempts (included above) & 0.218 & 0.000 \\
        \bottomrule
    \end{tabular*}
\end{table}

\FloatBarrier
\noindent\textbf{Agreement with a dense GP.} On a separate, duration-stratified subset of $40$ videos chosen before inspecting results, we compare predictors with identical observations, prior mean, length scale, diagonal adjustment, and FP64 arithmetic. Both predict relevance for the same unscored frames and use the same coverage term to choose the next frame.

After clipping predictions to the normalized score range, the mean absolute difference, averaged over observed sets, is $0.00706$ (95th percentile: $0.02375$). Both predictors choose the same next frame in $424$ of $451$ sets ($94.0\%$), indicating close agreement on the tested pools.

\noindent\textbf{Downstream QA with dense and banded GP prediction.} Agreement on a fixed observed set does not establish whether the approximation affects the final answer: small prediction differences can change subsequent scoring decisions and the frames supplied to the MLLM. To test this downstream effect, we compare dense-GP and banded-GP FORTE on the full evaluation sets of LongVideoBench, Video-MME, LVBench, and MLVU, using the same splits and questions as Table~\ref{tab:main-results}. The comparison uses $\rho=0.5$ and $K=64$, with BLIP and Qwen3-VL-8B-Instruct. Both variants start from identical uniform observations, then independently perform adaptive scoring, global keyframe optimization, and answer generation. Their scored pools can therefore diverge as relevance predictions change.

The dense reference uses the full kernel matrix and all observed frames in each posterior-mean prediction; the banded variant uses the approximation in Appendix~\ref{app:gp-approximation}. Both retain the same RBF kernel, prior-mean rule, length scale, FP64 arithmetic, and diagonal-adjustment schedule. Scoring and final-frame budgets, normalization and clipping, coverage weights, batching, final optimization, completion, and downstream inference settings remain unchanged. Table~\ref{tab:dense-gp-qa} reports full-benchmark QA accuracy and GP fitting-and-prediction time, measured on the same CPU host and averaged over three timing repeats.

The reported four-benchmark mean accuracies differ by $0.1$ percentage point in favor of dense GP, with absolute differences of at most $0.2$ points on individual benchmarks. Banded GP reduces fitting-and-prediction time from $15.07$ to $0.81$ seconds per question, an approximately $18.6\times$ speedup. This substantial reduction in GP computation, accompanied by a small observed accuracy difference, supports using the banded approximation as FORTE's default predictor.

\begin{table}[htbp]
    \centering
    \caption{Dense-GP and banded-GP FORTE on the full evaluation sets of all four benchmarks at $\rho=0.5$ and $K=64$, using the same questions as Table~\ref{tab:main-results}. Accuracy is reported in percent; Avg.\ is the unweighted four-benchmark mean. GP time includes fitting and prediction across all scoring rounds per question, excluding scorer and MLLM inference. Times are averaged within each benchmark and then equally across benchmarks.}
    \label{tab:dense-gp-qa}
    \small
    \setlength{\tabcolsep}{3pt}
    \begin{tabular*}{\linewidth}{@{\extracolsep{\fill}}lcccccc@{}}
        \toprule
        GP variant & LongVB & Video-MME & LVBench & MLVU & Avg. & \shortstack{GP time\\(s/question)} \\
        \midrule
        Banded (default) & 68.1 & 69.9 & 50.7 & 75.7 & 66.1 & 0.81 \\
        Dense & 68.3 & 69.8 & 50.9 & 75.8 & 66.2 & 15.07 \\
        \bottomrule
    \end{tabular*}
\end{table}

\FloatBarrier
\subsection{Computational Cost}
\label{app:online-cost}

Table~\ref{tab:mean-cost} gives the runtime breakdown underlying Figure~\ref{fig:online-core-cost} and peak GPU memory. Fixed-budget methods use $\rho=0.5$; FOCUS retains its official scoring schedule.

\noindent\textbf{Videos and repetitions.} We select 25 videos per benchmark, five from each of five equally sized duration groups, and pair each with its first annotated question. Every method runs three times on these 100 pairs using the same CPU host and A100-SXM4-80GB GPU. We average over repeats, then over pairs.

\noindent\textbf{Measurement protocol.} Each run has eight logical CPUs: decoding uses four workers with one codec thread each, preprocessing uses four threads, and math libraries use one. Frames are decoded in chunks of 32 and scored in FP32 mini-batches of 16. Selection and answering run sequentially on the same GPU; their separately measured wall times are summed, excluding loading and warmup. Inference uses CUDA-synchronized timing, and decoding records exposed wait rather than summed worker CPU time.

\noindent\textbf{Timing components.} Video decoding includes decoding for both scoring and answering. Scorer forward measures BLIP inference, while MLLM forward includes visual encoding, prompt processing, and autoregressive answer generation. FORTE's adaptive-scoring component includes GP fitting, relevance prediction, and coverage-based acquisition; its final optimization is reported separately. Other operations include preprocessing, data transfers, input construction, and remaining pipeline work. A dash denotes a component that is inapplicable or not separately reported, rather than a measured zero.

\noindent\textbf{Cost breakdown.} Decoding and scorer inference dominate runtime. FORTE spends $0.83$ seconds on adaptive scoring and $0.05$ seconds on final optimization, together $3.4\%$ of its $25.56$-second total. Its latency is within $0.7$ seconds of all compared fixed-grid baselines. Uniform is faster because it requires no relevance scoring.

\noindent\textbf{GPU memory.} Peak allocated memory is the maximum over videos and repeats, reported separately for selection and answering. All scored methods reach $2.52$\,GiB and $23.46$\,GiB, respectively; FORTE does not increase either peak. Uniform's selection-stage memory is unreported.

\begin{table}[htbp]
    \centering
    \caption{Mean runtime and peak allocated GPU memory on 100 videos with three repeats per method. Fixed-budget scored methods use $\rho=0.5$, while FOCUS follows its official schedule; all use $K=64$. Dashes denote inapplicable or separately unreported components. Rounding can cause displayed components to differ slightly from the reported totals.}
    \label{tab:mean-cost}
    \small
    \setlength{\tabcolsep}{2pt}
    \begin{tabular*}{\linewidth}{@{\extracolsep{\fill}}lrrrrrrrrr@{}}
        \toprule
        \multirow{3}{*}{Method} & \multicolumn{7}{c}{Mean time (s / question)} & \multicolumn{2}{c}{Peak GPU (GiB)} \\
        \cmidrule(lr){2-8}\cmidrule(lr){9-10}
        & \shortstack{Video\\decoding} & \shortstack{Scorer\\forward} & \shortstack{Adaptive\\scoring} & \shortstack{Final\\selection} & \shortstack{Other\\operations} & \shortstack{MLLM\\forward} & Total & Selection & Answering \\
        \midrule
        Uniform & 10.58 & --   & --   & --   & 2.11 & 2.51 & 15.20 & --   & 23.46 \\
        Top-K   & 10.67 & 7.87 & --   & --   & 3.85 & 2.58 & 24.97 & 2.52 & 23.46 \\
        AKS     & 10.61 & 7.87 & --   & --   & 3.85 & 2.54 & 24.87 & 2.52 & 23.46 \\
        FOCUS   & 10.70 & 6.93 & --   & --   & 3.67 & 2.49 & 23.79 & 2.52 & 23.46 \\
        WFS-SB  & 10.56 & 7.87 & --   & --   & 3.87 & 2.57 & 24.87 & 2.52 & 23.46 \\
        KFC     & 10.65 & 7.87 & --   & --   & 4.47 & 2.53 & 25.52 & 2.52 & 23.46 \\
        \midrule
        \textbf{Ours} & 10.60 & 7.87 & 0.83 & 0.05 & 3.65 & 2.56 & 25.56 & 2.52 & 23.46 \\
        \bottomrule
    \end{tabular*}
\end{table}

\FloatBarrier
\subsection{Duration Breakdowns}
\label{app:duration-results}

Table~\ref{tab:duration-breakdown} uses the annotated duration groups of LongVideoBench and Video-MME; overall accuracy weights groups by question count.

FORTE leads overall on both benchmarks. On LongVideoBench, it leads the 1\,h group, where its gain over Uniform is largest, and ties for best in the 15\,s group; other baselines lead the 1\,min and 10\,min groups. On Video-MME, FORTE leads the short and long groups, while BOLT leads the medium group. The strongest benefits thus occur on long videos, with gains also present in some shorter-duration groups.

\begin{table}[htbp]
    \centering
    \caption{QA accuracy (\%) by video duration on LongVideoBench and Video-MME, using Qwen3-VL-8B, BLIP, and $K=64$ input frames. Scored methods use $\rho=0.5$, except FOCUS with an effective budget of approximately $0.44$; Uniform requires no relevance scoring. Overall accuracy is the question-weighted average across duration groups. Bold marks the best results in each accuracy column, including ties.}
    \label{tab:duration-breakdown}
    \small
    \setlength{\tabcolsep}{2pt}
    \newcommand{\appendixhead}[1]{\makebox[0pt][c]{#1}}
    \begin{tabular*}{\linewidth}{@{\extracolsep{\fill}}lcccccccccc@{\hspace{9pt}}}
        \toprule
        \multirow{2}{*}{Method} & \multirow{2}{*}{$\rho$} & \multicolumn{5}{c}{LongVideoBench} & \multicolumn{4}{c}{Video-MME} \\
        \cmidrule(lr){3-7}\cmidrule(lr){8-11}
        & & \appendixhead{15\,s} & \appendixhead{1\,min} & \appendixhead{10\,min} & \appendixhead{1\,h} & \appendixhead{Overall} & \appendixhead{Short} & \appendixhead{Medium} & \appendixhead{Long} & \appendixhead{Overall} \\
        \midrule
        Uniform & 0 & 78.3 & 76.2 & 62.1 & 53.0 & 62.4 & 79.8 & 67.8 & 55.7 & 67.7 \\
        \midrule
        Top-K & 0.5 & \textbf{79.9} & \textbf{79.7} & 64.1 & 58.0 & 65.7 & 78.7 & 67.0 & 58.8 & 68.1 \\
        AKS & 0.5 & \textbf{79.9} & \textbf{79.7} & 64.8 & 56.0 & 65.1 & 78.7 & 70.1 & 58.8 & 69.0 \\
        BOLT & 0.5 & 64.6 & 70.9 & 65.5 & 60.3 & 63.9 & 78.7 & \textbf{70.4} & 59.1 & 69.4 \\
        Q-Frame & 0.5 & \textbf{79.9} & \textbf{79.7} & 65.3 & 53.9 & 64.4 & 78.7 & 66.7 & 57.4 & 67.6 \\
        AdaQ & 0.5 & \textbf{79.9} & \textbf{79.7} & 64.1 & 59.0 & 66.2 & 78.7 & 67.3 & 58.9 & 68.3 \\
        WFS-SB & 0.5 & \textbf{79.9} & \textbf{79.7} & 66.5 & 55.7 & 65.5 & 78.7 & 67.6 & 58.9 & 68.4 \\
        FOCUS & $\sim\!0.44$ & 76.2 & 72.1 & 65.0 & 56.9 & 64.1 & 75.2 & 64.1 & 56.7 & 65.3 \\
        KFC & 0.5 & \textbf{79.9} & \textbf{79.7} & \textbf{68.4} & 54.3 & 65.5 & 78.7 & 65.9 & 58.9 & 67.8 \\
        \textbf{Ours} & 0.5 & \textbf{79.9} & 76.2 & 67.2 & \textbf{62.4} & \textbf{68.1} & \textbf{81.2} & 69.1 & \textbf{59.4} & \textbf{69.9} \\
        \bottomrule
    \end{tabular*}
\end{table}

\FloatBarrier

\subsection{Scoring-Budget Regimes}
\label{app:budget-regimes}

When $B<K$, scored-pool selectors require supplementation to supply $K$ frames (Appendix~\ref{app:completion-policy}); when $B\ge K$, selection can remain within the scored pool if enough usable candidates are available. We define these groups using the integer budget in Appendix~\ref{app:implementation} and apply them to every method, including Uniform and FOCUS despite their different scoring protocols.

\noindent\textbf{Prevalence of $B<K$.} The fraction of questions in this regime decreases from $52.5\%$ at $\rho=0.1$ to $16.2\%$ at $0.5$ and $8.0\%$ at $1.0$ (Table~\ref{tab:budget-regime-counts}). At the default budget, only LongVideoBench and Video-MME contain such questions. These are budget-defined counts, not supplementation rates: completion may still be needed when $B\ge K$ if fewer than $K$ usable candidates remain.

\begin{table}[htbp]
    \centering
    \caption{Questions with a scoring budget smaller than the final frame budget, $B<K$, for $K=64$. Each entry gives the question count and its percentage within the corresponding benchmark. Total percentages use all $7{,}760$ evaluated questions as the denominator.}
    \label{tab:budget-regime-counts}
    \small
    \setlength{\tabcolsep}{6pt}
    \renewcommand{\arraystretch}{1.1}
    \begin{tabular*}{\linewidth}{@{\extracolsep{\fill}}lccc@{}}
        \toprule
        Benchmark & $\rho=0.1$ & $\rho=0.5$ & $\rho=1.0$ \\
        \midrule
        LongVideoBench & $773$ ($57.8\%$) & $361$ ($27.0\%$) & $361$ ($27.0\%$) \\
        Video-MME & $1{,}554$ ($57.6\%$) & $897$ ($33.2\%$) & $258$ ($9.6\%$) \\
        LVBench & $0$ & $0$ & $0$ \\
        MLVU & $1{,}746$ ($80.3\%$) & $0$ & $0$ \\
        \midrule
        \textbf{Total} & \textbf{4,073 (52.5\%)} & \textbf{1,258 (16.2\%)} & \textbf{619 (8.0\%)} \\
        \bottomrule
    \end{tabular*}
\end{table}

\noindent\textbf{Supplemented frame inputs.} A separate audit at $\rho=0.5$ and $K=64$ uses the approximation algorithm for final selection. For evaluated questions $\mathcal{Q}$, let $n_{\mathrm{sup}}(q)$ count supplemental frames for question $q$. Their fraction of all input frames is
\begin{equation}
    f_{\mathrm{sup}}=\frac{\sum_{q\in\mathcal{Q}}n_{\mathrm{sup}}(q)}{K|\mathcal{Q}|}.
    \label{eq:supplemented-frame-fraction}
\end{equation}
The denominator includes every evaluated question. A frame reused across questions is counted once per input; supplemental frames receive no additional relevance scoring.

Supplementation occurs only on LongVideoBench and Video-MME (Table~\ref{tab:supplemented-frame-inputs}). Pooled across questions, $8.13\%$ of input frames are supplemental and $91.87\%$ are scored. Equal benchmark weighting instead gives a supplemented fraction of $8.64\%$.

\begin{table}[htbp]
    \centering
    \caption{Composition of FORTE's final frame inputs at $\rho=0.5$ and $K=64$, using the approximation algorithm for final selection. Counts include all evaluated questions. Supplemented and scored percentages partition the final frame inputs; Pooled sums the counts across benchmarks before computing the percentages.}
    \label{tab:supplemented-frame-inputs}
    \small
    \setlength{\tabcolsep}{4pt}
    \renewcommand{\arraystretch}{1.1}
    \begin{tabular*}{\linewidth}{@{\extracolsep{\fill}}lrrr@{}}
        \toprule
        Benchmark & \shortstack{Supplemental / total\\frame inputs} & \shortstack{Supplemented\\(\%)} & \shortstack{Scored\\(\%)} \\
        \midrule
        LongVideoBench & $18{,}943\,/\,85{,}568$ & 22.14 & 77.86 \\
        Video-MME & $21{,}435\,/\,172{,}800$ & 12.40 & 87.60 \\
        LVBench & $0\,/\,99{,}136$ & 0.00 & 100.00 \\
        MLVU & $0\,/\,139{,}136$ & 0.00 & 100.00 \\
        \midrule
        Pooled & $40{,}378\,/\,496{,}640$ & 8.13 & 91.87 \\
        \bottomrule
    \end{tabular*}
\end{table}

\noindent\textbf{Accuracy within each regime.} Within $B\ge K$, FORTE achieves the highest observed four-benchmark mean accuracy at both scoring budgets (Table~\ref{tab:budget-regime-accuracy}). At $\rho=0.1$, it leads LongVideoBench, Video-MME, and LVBench, while KFC leads MLVU; at $\rho=0.5$, it leads all four benchmarks. Thus, the observed mean advantage is not confined to questions whose scoring budget is too small to supply $K$ input frames.

Results within $B<K$ are more mixed. FORTE has the highest observed accuracy on MLVU at $\rho=0.1$ and Video-MME at $\rho=0.5$, while Top-K, AKS, WFS-SB, and KFC tie for the highest accuracy on LongVideoBench at both budgets and Video-MME at $\rho=0.1$. For the budget-matched score-based methods, final inputs necessarily include unscored supplemental frames, so accuracy reflects scored-pool construction together with the completion rule. Because group membership differs across regimes and scoring budgets, differences in their absolute accuracies do not isolate the effect of supplementation or additional scoring.

\begin{table}[!htb]
    \centering
    \caption{QA accuracy (\%) by scoring-budget regime at $\rho\in\{0.1,0.5\}$ and $K=64$, using Qwen3-VL-8B and BLIP. FORTE uses the exact algorithm for final optimization. Within each budget block, all methods share identical question groups defined by the corresponding integer scoring budget. Uniform requires no scoring. $\dagger$FOCUS is reported only in the $\rho=0.5$ block at its effective allowance of approximately $0.44$ and is not budget-matched. Avg.\ is the unweighted four-benchmark mean within $B\ge K$. The $B<K$ group contains no LVBench questions at either budget and no MLVU questions at $\rho=0.5$, so no four-benchmark mean is reported for it. Bold marks the highest observed accuracy in each row, including ties.}
    \label{tab:budget-regime-accuracy}
    \small
    \setlength{\tabcolsep}{2pt}
    \renewcommand{\arraystretch}{1.1}
    \begin{tabular*}{\linewidth}{@{\extracolsep{\fill}}llr*{7}{c}@{}}
        \toprule
        Regime & Benchmark & Questions & FORTE & Uniform & Top-K & AKS & WFS-SB & KFC & FOCUS$^{\dagger}$ \\
        \midrule
        \multicolumn{10}{@{}l}{\emph{Scoring budget $\rho=0.1$}} \\
        \midrule
        \multirow{3}{*}{$B<K$} & LongVB & $773$ & 70.2 & 69.2 & \textbf{70.4} & \textbf{70.4} & \textbf{70.4} & \textbf{70.4} & -- \\
        & Video-MME & $1{,}554$ & 74.5 & 74.7 & \textbf{75.2} & \textbf{75.2} & \textbf{75.2} & \textbf{75.2} & -- \\
        & MLVU & $1{,}746$ & \textbf{73.3} & 71.9 & 71.8 & 71.8 & 71.8 & 71.8 & -- \\
        \midrule
        \multirow{5}{*}{$B\ge K$} & LongVB & $564$ & \textbf{57.6} & 53.0 & 53.9 & 53.4 & 53.4 & 51.6 & -- \\
        & Video-MME & $1{,}146$ & \textbf{62.0} & 58.3 & 60.0 & 60.9 & 59.0 & 58.8 & -- \\
        & LVBench & $1{,}549$ & \textbf{49.0} & 41.0 & 46.6 & 45.6 & 46.9 & 47.6 & -- \\
        & MLVU & $428$ & 68.7 & 63.8 & 66.9 & 67.7 & 68.8 & \textbf{69.4} & -- \\
        \cmidrule(l){2-10}
        & Avg. & -- & \textbf{59.3} & 54.0 & 56.9 & 56.9 & 57.0 & 56.9 & -- \\
        \midrule
        \multicolumn{10}{@{}l}{\emph{Scoring budget $\rho=0.5$}} \\
        \midrule
        \multirow{2}{*}{$B<K$} & LongVB & $361$ & 78.1 & 77.3 & \textbf{79.8} & \textbf{79.8} & \textbf{79.8} & \textbf{79.8} & 74.2 \\
        & Video-MME & $897$ & \textbf{81.3} & 79.8 & 78.7 & 78.7 & 78.7 & 78.7 & 75.4 \\
        \midrule
        \multirow{5}{*}{$B\ge K$} & LongVB & $976$ & \textbf{64.4} & 56.9 & 60.5 & 59.7 & 60.2 & 60.2 & 60.3 \\
        & Video-MME & $1{,}803$ & \textbf{64.3} & 61.7 & 62.8 & 64.2 & 63.2 & 62.4 & 60.3 \\
        & LVBench & $1{,}549$ & \textbf{50.7} & 41.0 & 50.6 & 47.8 & 50.6 & 46.4 & 48.7 \\
        & MLVU & $2{,}174$ & \textbf{75.7} & 70.3 & 73.8 & 74.5 & 74.7 & 74.9 & 71.3 \\
        \cmidrule(l){2-10}
        & Avg. & -- & \textbf{63.8} & 57.5 & 61.9 & 61.5 & 62.2 & 61.0 & 60.2 \\
        \bottomrule
    \end{tabular*}
\end{table}

\subsection{Further Mechanism Controls}
\label{app:pending-mechanisms}

Each variant in Table~\ref{tab:pending-mechanisms} changes only the named component of default FORTE, with variants grouped by stage.

\noindent\textbf{Stage 1: Adaptive relevance scoring.} Replacing GP prediction with linear interpolation or the nearest observed score reduces accuracy on all four benchmarks.

The zero-mean GP sets $\mu_{\mathrm{obs}}=0$ in Equation~\ref{eq:gp-mean} instead of the observed average. Centering improves every benchmark and raises mean accuracy from $65.7\%$ to $66.1\%$, supporting a nonzero baseline in regions without nearby observations.

The round-frozen variant holds coverage gains fixed throughout each batch rather than reducing them around newly chosen frames. Its lower accuracy supports frame-by-frame coverage updates even when relevance predictions remain fixed.

\noindent\textbf{Stage 2: Global keyframe optimization.} Both controls keep the scored pool and measured relevance fixed. The square-root variant replaces the logarithmic gap reward with a scaled square-root reward, matching the marginal gain for midpoint insertion into a gap of length $\tau_K$ without changing the coverage weight. The other variant removes boundary-gap rewards, retaining only gaps between selected frames. Both reduce accuracy on all four benchmarks, supporting the chosen reward and boundary treatment.

\begin{table}[htbp]
    \centering
    \caption{Controls for the two stages of FORTE at $\rho=0.5$ and $K=64$. Ours uses the centered GP with the observed average as its prior mean. Variants are grouped by the stage they modify, with all other settings fixed. Values are accuracy (\%).}
    \label{tab:pending-mechanisms}
    \small
    \setlength{\tabcolsep}{2pt}
    \newcommand{\appendixhead}[1]{\makebox[0pt][c]{#1}}
    \begin{tabular*}{\linewidth}{@{\extracolsep{\fill}}lccccc@{\hspace{2pt}}}
        \toprule
        Variant & \appendixhead{LongVB} & \appendixhead{Video-MME} & \appendixhead{LVBench} & \appendixhead{MLVU} & \appendixhead{Avg.} \\
        \midrule
        \textbf{Ours (centered GP)} & \textbf{68.1} & \textbf{69.9} & \textbf{50.7} & \textbf{75.7} & \textbf{66.1} \\
        \midrule
        \multicolumn{6}{@{}l}{\emph{Stage 1: Adaptive relevance scoring}} \\
        Linear interpolation & 66.3 & 68.8 & 50.2 & 74.8 & 65.0 \\
        Nearest observed score & 66.3 & 68.5 & 49.6 & 74.7 & 64.8 \\
        Zero-mean GP & 67.9 & 69.8 & 50.0 & 75.2 & 65.7 \\
        Round-frozen coverage & 65.1 & 69.0 & 49.8 & 74.6 & 64.6 \\
        \midrule
        \multicolumn{6}{@{}l}{\emph{Stage 2: Global keyframe optimization}} \\
        Matched square-root potential & 66.9 & 68.5 & 49.5 & 74.9 & 64.9 \\
        No boundary-gap rewards & 67.4 & 68.6 & 49.9 & 74.2 & 65.0 \\
        \bottomrule
    \end{tabular*}
\end{table}

\FloatBarrier

\subsection{Comparison with Smooth Relevance Predictors}
\label{app:smooth-predictors}

Table~\ref{tab:smooth-predictors} replaces the GP mean predictor at $\rho\in\{0.1,0.5\}$ while keeping initialization, batching, coverage updates and weights, normalization, completion, and final optimization fixed. Variants share initial observations but subsequently acquire frames according to their own predictions.

The primary control is normalized RBF kernel regression, which predicts relevance by averaging observed scores with temporal kernel weights:
\begin{equation}
    \mu_{\mathrm{RBF}}(t)=\frac{\sum_{t_i\in\mathcal{O}}k(t,t_i)r_i}{\sum_{t_i\in\mathcal{O}}k(t,t_i)}.
    \label{eq:rbf-regression-control}
\end{equation}
Here, $k$ is the kernel in Equation~\ref{eq:rbf-kernel} and $r_i$ the normalized observed score. RBF regression requires no kernel-system solve and shares the GP's fixed $\ell=10$ seconds, without predictor-specific tuning. We also evaluate shape-preserving piecewise cubic Hermite interpolation (PCHIP), complementing the simpler controls in Table~\ref{tab:pending-mechanisms}.

\noindent\textbf{Accuracy and cost.} The GP leads every benchmark at both budgets, improving mean accuracy over RBF regression by $0.7$ points at $\rho=0.5$ and $1.1$ points at $\rho=0.1$. The larger gain with fewer observations suggests that prediction quality matters more under sparse scoring. RBF regression and PCHIP are faster: the GP adds $0.28$--$0.41$ seconds over RBF regression while remaining below one second of fitting-and-prediction time per question.

\begin{table}[htbp]
    \centering
    \caption{Comparison of relevance predictors at two scoring budgets, with $K=64$ and global keyframe optimization. Only the predictor used during adaptive scoring changes. GP prediction and RBF regression share the same RBF kernel and fixed length scale, $\ell=10$ seconds. Accuracy is reported in percent; Avg.\ is the unweighted four-benchmark mean. Time is the mean total fitting-and-prediction time across all scoring rounds per video--question pair, excluding scorer and MLLM inference.}
    \label{tab:smooth-predictors}
    \small
    \setlength{\tabcolsep}{3pt}
    \begin{tabular*}{\linewidth}{@{\extracolsep{\fill}}lcccccc@{}}
        \toprule
        Predictor & LongVB & Video-MME & LVBench & MLVU & Avg. & Time (s) \\
        \midrule
        \multicolumn{7}{@{}l}{\emph{Scoring budget $\rho=0.5$}} \\
        FORTE (GP mean) & 68.1 & 69.9 & 50.7 & 75.7 & 66.1 & 0.81 \\
        RBF regression & 67.0 & 69.3 & 50.1 & 75.1 & 65.4 & 0.40 \\
        PCHIP interpolation & 66.8 & 69.0 & 50.3 & 75.0 & 65.3 & 0.21 \\
        \midrule
        \multicolumn{7}{@{}l}{\emph{Scoring budget $\rho=0.1$}} \\
        FORTE (GP mean) & 64.9 & 69.2 & 49.0 & 72.0 & 63.8 & 0.59 \\
        RBF regression & 63.7 & 68.8 & 47.4 & 71.0 & 62.7 & 0.31 \\
        PCHIP interpolation & 63.8 & 68.9 & 47.5 & 71.4 & 62.9 & 0.18 \\
        \bottomrule
    \end{tabular*}
\end{table}

\FloatBarrier

\subsection{Posterior Uncertainty versus Temporal Coverage}
\label{app:uncertainty-acquisition}

\noindent\textbf{Relation to GP-based sequential acquisition.} The adaptive-scoring stage is related to Gaussian-process-based Bayesian optimization and sequential experimental design, which use posterior predictions to choose subsequent evaluations. A common strategy, including GP-UCB-style acquisition, promotes exploration through posterior uncertainty. FORTE instead uses the GP posterior mean to estimate frame relevance and introduces exploration explicitly through marginal temporal coverage. The comparison below isolates these choices under the same predictor and scoring budget.

Table~\ref{tab:uncertainty-acquisition} tests posterior uncertainty as an alternative or addition to coverage at $\rho\in\{0.1,0.5\}$. The four rules use GP mean only, mean plus posterior standard deviation (GP-UCB-style acquisition), mean plus coverage (FORTE), or all three terms. All share $\mathcal{U}$, $B$, initial observations, the GP mean predictor, four-round schedule, normalization, completion, and exact final optimizer; only acquisition changes.

\noindent\textbf{Uncertainty-aware acquisition.} The uncertainty bonus is $\lambda\sigma(t)$, where $\sigma(t)$ is the GP posterior standard deviation of relevance at an unscored frame. Before batching or numerical approximation, its reference definition is
\begin{equation}
    \sigma^2(t)=k(t,t)-\mathbf k(t)^\top(\mathbf K+\epsilon\mathbf{I})^{-1}\mathbf k(t).
    \label{eq:gp-variance-control}
\end{equation}
We retain FORTE's unit-amplitude RBF kernel and $\ell=10$ seconds, set $\lambda=1$, and use $\beta_{\mathrm{acq}}=3.5$ and $\tau_B=T/(B+1)$ when coverage is included. Within each batch, the mean remains fixed while uncertainty is conditioned on each newly chosen frame's location, without requiring its unobserved score. Coverage also updates after every choice. Both exploration mechanisms can therefore discourage redundant within-batch choices.

\begin{table}[htbp]
    \centering
    \caption{Exploration mechanisms for adaptive relevance scoring, with $K=64$ and the same exact final optimizer. Uncertainty denotes $\lambda\sigma(t)$ with $\lambda=1$; coverage denotes $\beta_{\mathrm{acq}}\Delta C_{\tau_B}$ with $\beta_{\mathrm{acq}}=3.5$. Accuracy is reported in percent; Avg.\ weights the four benchmarks equally. Bold marks the highest accuracy in each budget block. Acquisition time includes GP fitting, prediction, and batch construction, excluding video decoding, scorer inference, final optimization, and MLLM inference. Times are per video--question pair, averaged within benchmarks and then equally across benchmarks.}
    \label{tab:uncertainty-acquisition}
    \small
    \setlength{\tabcolsep}{2pt}
    \begin{tabular*}{\linewidth}{@{\extracolsep{\fill}}lrrrrrr@{}}
        \toprule
        Acquisition rule & LongVB & Video-MME & LVBench & MLVU & Avg. & Time (s) \\
        \midrule
        \multicolumn{7}{@{}l}{\emph{Scoring budget $\rho=0.1$}} \\
        Mean only & 63.1 & 68.0 & 47.5 & 71.4 & 62.5 & 0.60 \\
        Mean + uncertainty & 63.6 & 68.4 & 47.8 & 71.8 & 62.9 & 0.95 \\
        Mean + coverage (FORTE) & \textbf{64.9} & \textbf{69.2} & 49.0 & 72.0 & \textbf{63.8} & 0.65 \\
        Mean + uncertainty + coverage & 64.8 & 69.1 & \textbf{49.1} & \textbf{72.1} & \textbf{63.8} & 1.00 \\
        \midrule
        \multicolumn{7}{@{}l}{\emph{Scoring budget $\rho=0.5$}} \\
        Mean only & 66.4 & 67.9 & 49.6 & 73.0 & 64.2 & 0.80 \\
        Mean + uncertainty & 67.0 & 68.4 & 49.8 & 73.6 & 64.7 & 1.30 \\
        Mean + coverage (FORTE) & \textbf{68.1} & \textbf{69.9} & \textbf{50.7} & \textbf{75.7} & \textbf{66.1} & 0.83 \\
        Mean + uncertainty + coverage & 68.0 & 69.8 & \textbf{50.7} & 75.6 & 66.0 & 1.33 \\
        \bottomrule
    \end{tabular*}
\end{table}

\noindent\textbf{Accuracy.} Uncertainty improves on mean-only acquisition on every benchmark, but coverage yields larger gains at both budgets. Adding uncertainty to coverage leaves mean accuracy unchanged at $\rho=0.1$ and slightly lower at $0.5$. Thus, at the tested setting, uncertainty helps without coverage but adds no mean accuracy benefit once coverage is present.

\noindent\textbf{Acquisition cost.} Adding uncertainty to FORTE increases acquisition time from $0.65$ to $1.00$ seconds at $\rho=0.1$ and from $0.83$ to $1.33$ seconds at $0.5$, without improving mean accuracy. Coverage alone therefore offers the better observed accuracy--cost trade-off.

\FloatBarrier

\section{Extended Analysis of Related Work}
\label{app:related-work}

This section supplements Section~\ref{sec:related-work} with a broader overview of long-video modeling and keyframe selection, focusing on their connections to adaptive scoring and global keyframe optimization.

\subsection{Video LLMs for Long-Video Understanding}

Video LLMs such as LLaVA-OneVision, LLaVA-Video, Qwen3-VL, and InternVL3.5 combine visual representations with language reasoning \citep{llava-onevision,llava-video,qwen3vl,internvl35}. To extend these capabilities to long videos, LongVILA scales the supported context \citep{longvila}, while MovieChat, MA-LMM, and VideoStreaming retain historical information through memory mechanisms \citep{moviechat,ma-llm,videostreaming}. LongVLM, LongVU, Video-XL, and VideoChat-Flash use hierarchical representations or visual compression \citep{longvlm,longvu,video-xl,videochat-flash}; FLoC further explores facility-location-based token compression \citep{floc}. These approaches improve how video information is represented and retained. Keyframe selection complements them by determining which frames form the model's visual input.

\subsection{Keyframe Selection Methods}

\noindent\textbf{Training-based selection.} SeViLA, Frame-Voyager, and ReFocus learn selection policies from supervision or feedback \citep{sevila,frame-voyager,refocus}. TimeSearch-R, VideoZoomer, and LongVideo-R1 use reinforcement learning to learn temporal search policies \citep{timesearch-r,videozoomer,longvideo-r1}. In contrast, FORTE operates with frozen models and requires no additional selector training.

\noindent\textbf{LLM-guided evidence search.} VideoAgent coordinates vision--language tools for iterative retrieval \citep{videoagent}, while VideoTree and DrVideo organize captioned or text-based evidence for question answering \citep{videotree,drvideo}. A.I.R. and DIG guide evidence selection through VLM analysis or MLLM feedback \citep{air,dig}. Such workflows use semantic judgments beyond frame--query similarity, but their additional captioning, analysis, and reasoning calls can be computationally expensive. FORTE instead uses a lightweight relevance scorer without invoking the downstream answer model during selection.

\subsubsection{Score-Based Frame Selection}

\noindent\textbf{Relevance-based selection.} Pretrained vision--language models such as CLIP and BLIP provide frame--query relevance scores without task-specific training \citep{clip,blip}. KeyVideoLLM uses text--frame matching to select relevant frames \citep{keyvideollm}. Beyond Top-K selection, BOLT samples according to temporal relevance mass, Q-Frame combines relevance-based sampling with resolution adaptation, and AdaQ adjusts its sampling distribution using score statistics \citep{bolt,q-frame,adaq}. These methods refine the selection of input frames from scored candidates; when that candidate pool is fixed in advance, improving the final selection rule cannot recover evidence omitted before scoring.

AKS balances relevance with temporal coverage, while TCS combines relevance-focused clips with sparse global context \citep{aks,tcs}. WFS-SB~\citep{wfssb} detects semantic boundaries in the relevance trajectory and selects diverse frames within the resulting segments. Nar-KFC incorporates visual diversity \citep{narkfc}. Together, these methods highlight the importance of selecting complementary evidence rather than relying on relevance alone. FOCUS allocates relevance evaluations using clip-level mean relevance and confidence bounds \citep{focus}.

FORTE jointly addresses where to spend a limited scoring budget and which scored frames to retain. Its adaptive scoring stage combines frame-level relevance predictions with marginal temporal coverage to construct the candidate pool. Its global optimization stage then balances measured relevance and temporal coverage through a logarithmic gap objective, for which we derive an exact $\mathcal{O}(KN)$ algorithm (Section~\ref{sec:final-selection}). Our contribution lies in coupling adaptive candidate acquisition with efficient, exact optimization of the final input.


\FloatBarrier

\section{Qualitative Analysis}
\label{app:qualitative}

We compare Uniform, Top-K, AKS, FOCUS, and FORTE using their selected frames and saved answers from Qwen3-VL-8B-Instruct. All methods supply $K=64$ frames; Top-K, AKS, and FORTE use BLIP with a nominal scoring budget of $\rho=0.5$, while FOCUS follows its official scoring procedure. FORTE uses the exact algorithm for global keyframe optimization. The following cases distinguish acquiring evidence in the scored pool from retaining it in the final input, and include a shared failure despite retained evidence. They are selected examples rather than an estimate of how frequently each behavior occurs.

\begin{figure}[p]
    \centering
    \includegraphics[width=\linewidth]{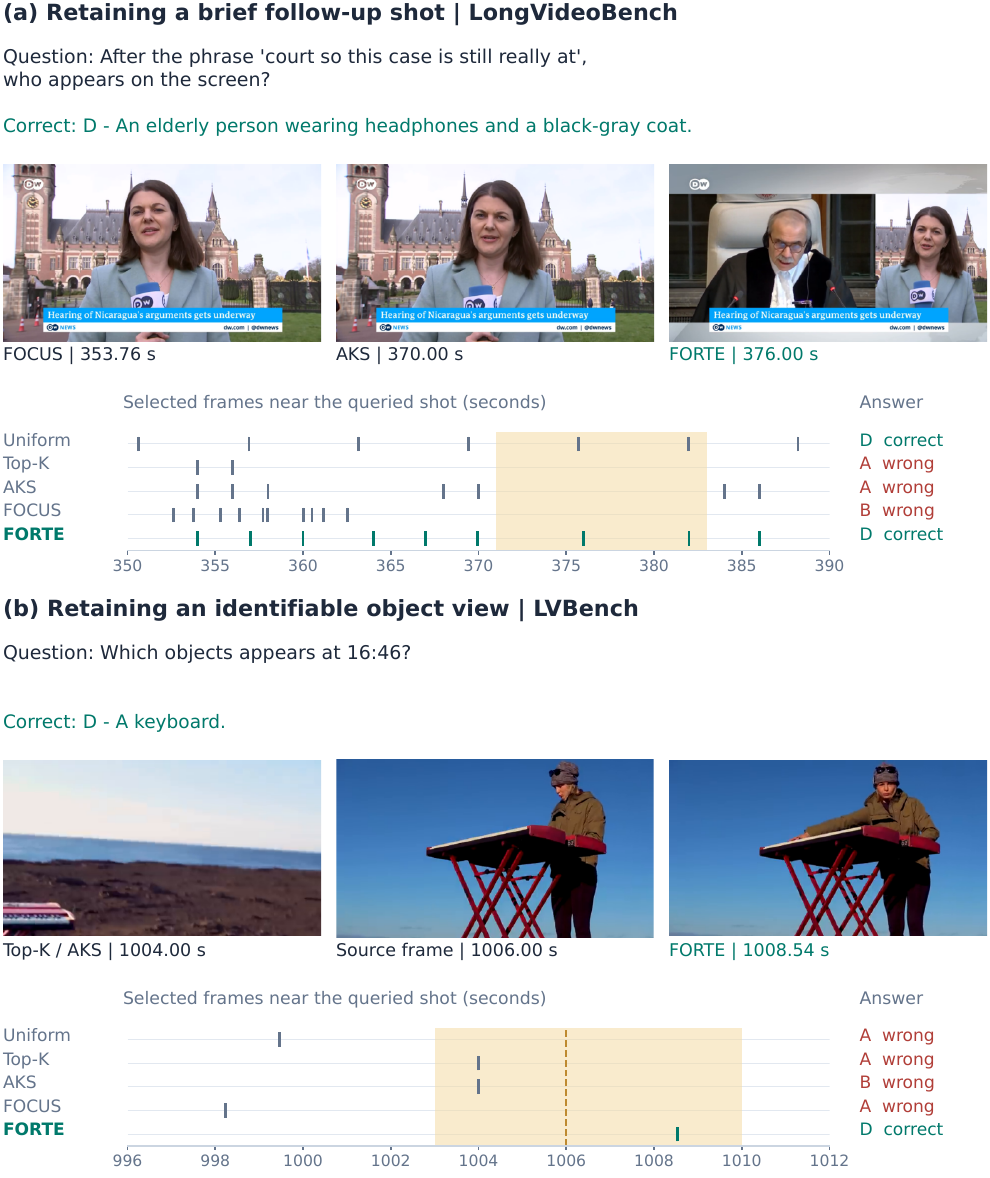}
    \caption{Brief evidence may be omitted from the final input or shown only partially. Each tick marks a selected frame within the displayed time range; shading highlights the queried interval or its surrounding shot. (a) FORTE and Uniform retain the courtroom follow-up at 371--383 seconds. (b) FORTE retains a clear keyboard view near the requested timestamp (dashed line). The middle image in (b) is a source-video reference, not a selected input; all other images are selected-frame excerpts.}
    \label{fig:qualitative-localized}
\end{figure}

\begin{figure}[p]
    \ContinuedFloat
    \centering
    \includegraphics[width=\linewidth]{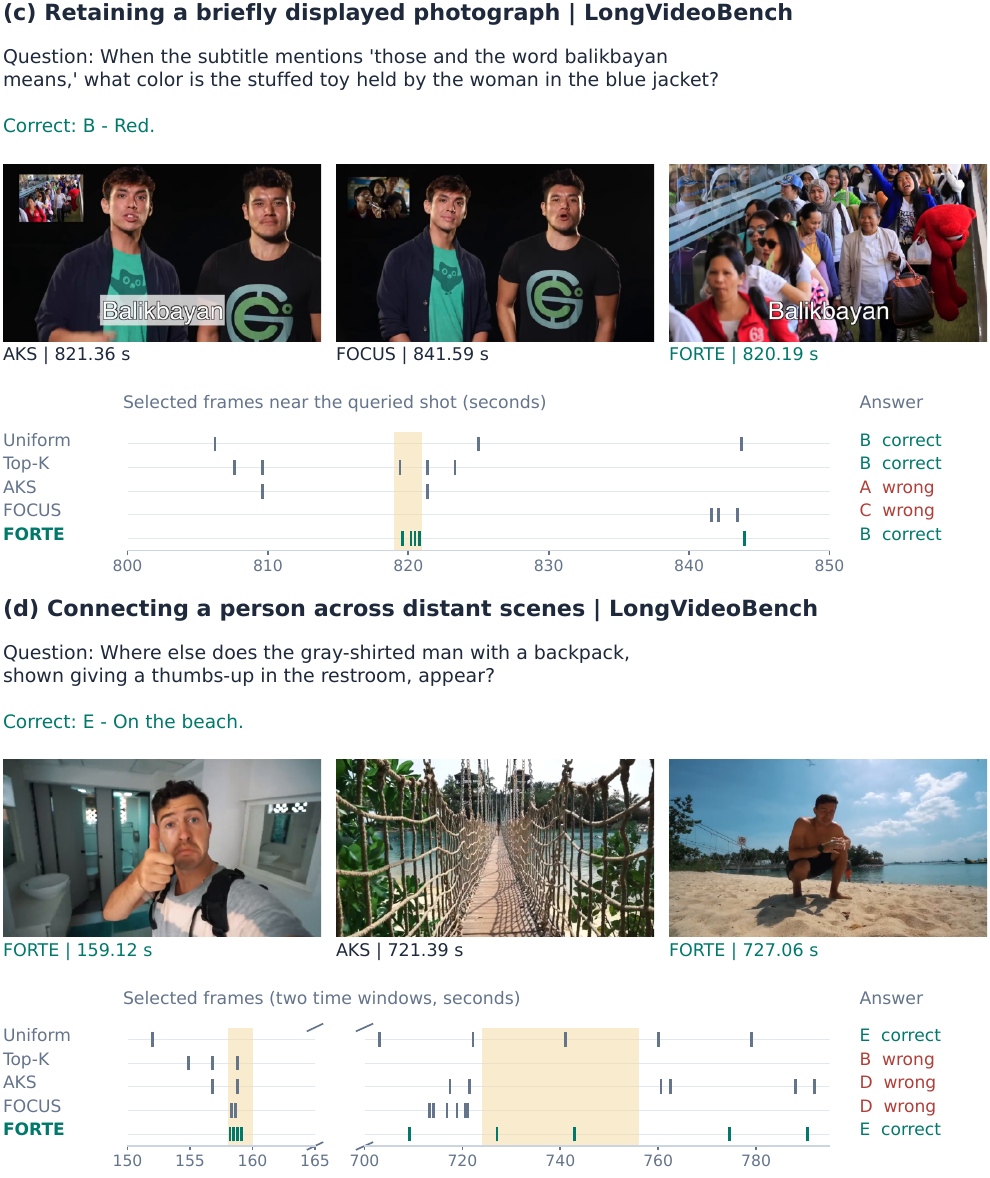}
    \caption{Additional baseline comparisons (continued). (c) The full-size photograph makes the toy's color readily visible; Top-K also retains this view and answers correctly. (d) FORTE retains the restroom reference and the later shore sequence. The broken timeline shows two separate time windows with different scales; shading marks the annotated intervals. All images are selected-frame excerpts, and all ticks represent selected frames within the displayed windows.}
    \label{fig:qualitative-additional}
\end{figure}

\begin{figure}[p]
    \ContinuedFloat
    \centering
    \includegraphics[width=\linewidth]{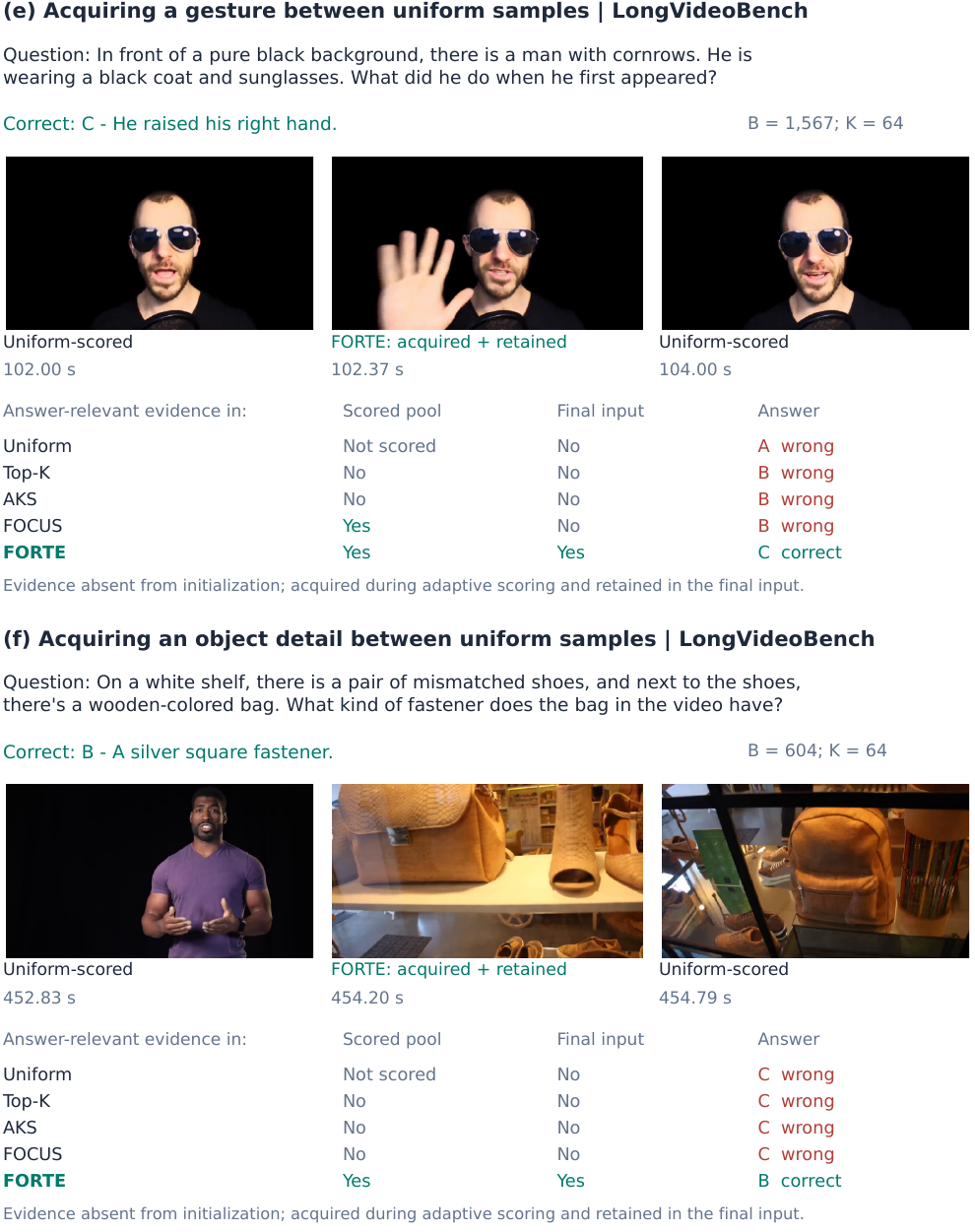}
    \caption{Evidence acquisition and final retention (continued). FORTE captures (e) a brief gesture and (f) a bag's fastener between uniform-scored frames, and is the only method to answer correctly in both cases. The tables distinguish visually usable evidence in the scored pool from evidence retained in the final input, alongside each method's answer. Uniform selects its final input without scoring. Top-K, AKS, and FORTE each score 1,567 frames in (e) and 604 in (f); FOCUS scores 1,369 and 509, respectively. All methods retain 64 frames without supplementation. Uniform-scored images are source-video frames; FORTE images are reconstructed selected inputs.}
    \label{fig:qualitative-acquisition}
\end{figure}

\paragraph{Preserving brief evidence.} Figure~\ref{fig:qualitative-localized}(a) asks who appears after a reporter's spoken cue. FORTE retains the subsequent courtroom shot, whereas Top-K, AKS, and FOCUS omit the specified follow-up interval. Importantly, Top-K and AKS had scored frames from this interval but did not retain them: making evidence available in the candidate pool does not ensure that it survives final selection. FOCUS instead retains several similar views of the reporter before the transition. FORTE and Uniform both preserve the follow-up and answer correctly, illustrating the value of maintaining temporal coverage when the most visually prominent subject is not the answer.

Figure~\ref{fig:qualitative-localized}(b) illustrates a related distinction between temporal proximity and readable evidence. The question asks which object appears at 16:46 in a roughly 42-minute video. FORTE retains a nearby view showing the keyboard and its stand, and is the only method to answer correctly. Top-K and AKS retain an earlier frame in which only an edge of the instrument is visible, while Uniform and FOCUS omit the shot. Thus, selecting a frame near the requested time is not equivalent to retaining a view from which the object can be identified.

\paragraph{Retaining a readable view.} Figure~\ref{fig:qualitative-additional}(c) asks for the color of a stuffed toy in a briefly displayed photograph. FORTE and Top-K retain the full-size photograph, in which the red toy is readily visible, and both answer correctly. AKS retains a nearby view of the presenters with a smaller inset of the photograph, while FOCUS skips the full-size image; both answer incorrectly. Uniform also answers correctly despite having no frame in the narrow full-size-photo interval. The comparison illustrates a difference in the visibility of the queried detail, rather than establishing that this particular view is necessary for a correct answer.

\paragraph{Linking distant scenes.} Figure~\ref{fig:qualitative-additional}(d) requires matching a man identified in a restroom to another location in the video. FORTE retains both the reference appearance and direct views of the man on the beach more than nine minutes later. Top-K, AKS, and FOCUS retain the restroom reference but omit the annotated shore sequence. AKS and FOCUS instead retain nearby suspension-bridge views and answer ``suspension bridge,'' whereas FORTE and Uniform answer ``beach.'' AKS and FOCUS also retain later beach-adjacent scenes, so the distinction is not the complete absence of beach imagery: FORTE preserves the direct matching scene together with the reference appearance.

\paragraph{Acquiring evidence absent from the scored pool.} The preceding comparisons primarily concern which evidence survives final selection. Figure~\ref{fig:qualitative-acquisition}(e--f) instead examines evidence that uniform scoring never observes. We inspect the scored pools as well as the final inputs, checking whether a frame shows the queried gesture or object detail rather than merely falling within the annotated interval. Top-K, AKS, and FORTE have matched scoring counts in each case, with $B>K$; all five methods return 64 frames without supplementation.

\begin{samepage}
In Figure~\ref{fig:qualitative-acquisition}(e), the question concerns the man's first appearance in a 52-minute video. Uniform scoring captures frames at 102 and 104 seconds, but neither shows his brief right-hand raise between them. FORTE's initialization also misses the gesture; adaptive scoring captures it at 102.37 seconds, and final optimization retains this frame. FOCUS scores a view of the same gesture but discards it, instead retaining a later clenched fist. FORTE is the only method to answer correctly: Uniform answers that the man takes off his sunglasses, while Top-K, AKS, and FOCUS answer that he clenches his right hand into a fist. The comparison thus separates two mechanisms: FORTE acquires an event absent from the uniformly scored pool and preserves evidence that FOCUS loses during final selection.

\end{samepage}

Figure~\ref{fig:qualitative-acquisition}(f) concerns the fastener on a bag displayed beside a pair of shoes. Uniform scoring captures the presenter at 452.83 seconds and a different, zippered backpack at 454.79 seconds, missing the intervening view of the queried fastener. Although the latter frame falls within the annotated interval, it does not show the required detail. FORTE's initialization also misses this evidence; adaptive scoring acquires a clear view at 454.20 seconds, which final optimization retains. FORTE correctly identifies a silver square fastener, whereas Uniform, Top-K, AKS, and FOCUS all answer ``silver round fastener.'' The usable detail is absent from the uniformly scored pool and FOCUS's scored pool, as well as all four baselines' final inputs. Together, these cases illustrate evidence acquisition between uniform samples within the video: final optimization cannot recover visual evidence that was never included in the scored pool.

\paragraph{A shared failure despite retained evidence.} Figure~\ref{fig:qualitative-failure} asks about the woman's wedding dress. All five methods retain views showing her black dress, yet every answer is white. Several attendants wear white, making it important to distinguish the queried woman from nearby people. The failure therefore cannot be explained solely by missing views of the relevant clothing. It highlights a remaining challenge beyond evidence retention: the downstream model must associate the question with the correct visual subject and use that evidence to answer. The saved responses alone do not establish whether subject confusion or another aspect of the model's reasoning caused the error. Together with the successful cases, this example distinguishes the quality of the selected evidence from the model's ability to interpret it.

\begin{figure}[htbp]
    \centering
    \includegraphics[width=\linewidth]{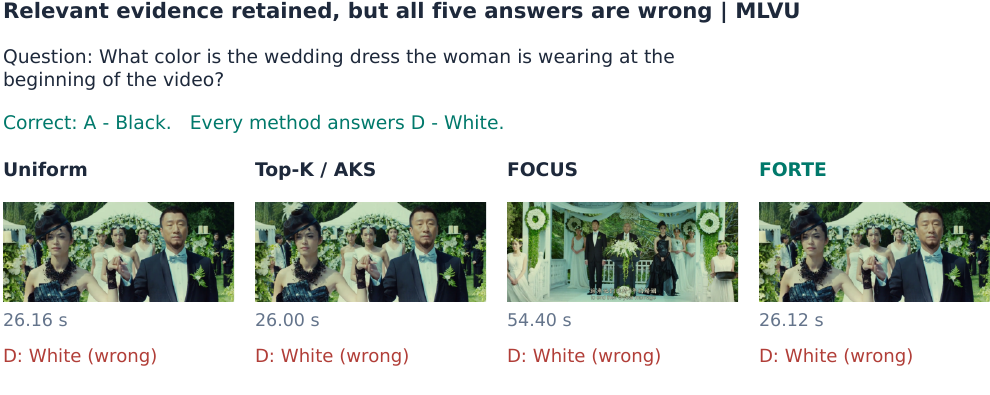}
    \caption{A shared failure on MLVU. The selected inputs visibly contain the black dress, but every method produces the incorrect answer ``white.'' Top-K and AKS select the same displayed frame. Retaining relevant evidence does not by itself guarantee that the downstream MLLM uses it correctly.}
    \label{fig:qualitative-failure}
\end{figure}

\FloatBarrier

\end{document}